\def\relu{{\text{ReLU }}}
\def\method{FTBC\xspace}
\newcommand{\cmark}{\color{ForestGreen}\ding{51}}
\newcommand{\xmark}{\color{Red}\ding{55}}
\newcommand{\gc}[1]{\cellcolor{gray!65} #1}
\begin{document}


\title{\method: Forward Temporal Bias Correction for Optimizing ANN-SNN Conversion}

\titlerunning{\method}




\author{
\author{Xiaofeng Wu\textsuperscript{*}\inst{1} \and
Velibor Bojkovic\textsuperscript{*}\inst{2} \and 
Bin Gu\inst{2} \and
Kun Suo\inst{3} \and
Kai Zou\inst{4}}
}


\institute{Faculty of Data Science, City University of Macau, Macau, China \email{xiaofengwu@cityu.edu.mo} \and
Department of Machine Learning, Mohamed bin Zayed University of Artificial Intelligence, Abu Dhabi, United Arab Emirates \email{bin.gu@mbzuai.ac.ae} \and
Department of Computer Science, Kennesaw State University, Georgia, USA \and
ProtagoLabs Inc} 





\maketitle

\renewcommand{\thefootnote}{\fnsymbol{footnote}} 
\footnotetext[1]{Equal contribution.}


\begin{abstract}

Spiking Neural Networks (SNNs) offer a promising avenue for energy-efficient computing compared with Artificial Neural Networks (ANNs), closely mirroring biological neural processes. However, this potential comes with inherent challenges in directly training SNNs through spatio-temporal backpropagation --- stemming from the temporal dynamics of spiking neurons and their discrete signal processing --- which necessitates alternative ways of training, most notably through ANN-SNN conversion. In this work, we introduce a lightweight Forward Temporal Bias Correction (\method) technique, aimed at enhancing conversion accuracy without the computational overhead. We ground our method on provided theoretical findings that through proper temporal bias calibration the expected error of ANN-SNN conversion can be reduced to be zero after each time step. We further propose a heuristic algorithm for finding the temporal bias only in the forward pass, thus eliminating the computational burden of backpropagation and we evaluate our method on CIFAR-10/100 and ImageNet datasets, achieving a notable increase in accuracy on all datasets. Codes are released at a GitHub repository.

\keywords{Spiking Neural Networks \and ANN-SNN Conversion \and Temporal Bias Correction }
\end{abstract}


\section{Introduction}
\label{sec:intro}

Spiking Neural Networks (SNNs), characterized by their ability to mimic the temporal dynamics of biological neurons through spiking and bursting activities, represent a significant leap forward in the quest for brain-like computational efficiency and accuracy~\cite{rathi2023exploring, mcculloch1943logical, hodgkin1952quantitative, lapicque1907recherches, DBLP:journals/nn/Maass97, DBLP:journals/tnn/Izhikevich03, fang2023spikingjelly}. Unlike traditional artificial neural networks (ANNs) that process information in a continuous manner, SNNs leverage discrete time events for data processing, which allows for a more energy-efficient computation by capturing the intrinsic temporal dynamics of neural information processing. This fundamental difference not only makes SNNs inherently more suited for tasks involving time-dependent data but also offers a promising path towards realizing computationally efficient, biologically plausible models that can operate on the edge~\cite{dalgaty2024mosaic, DBLP:journals/corr/abs-2401-01141}. 

Following the foundational exploration of SNNs, this evolution underscores the broad applicability and potential of SNNs into practical, real-world applications. Emerging applications like Spiking-YOLO~\cite{DBLP:conf/aaai/KimPNY20} for object detection, SpikingBERT~\cite{bal2023spikingbert} and SpikeGPT~\cite{DBLP:journals/corr/abs-2302-13939, wang2023masked} for natural language processing reveal SNNs' expanding role in tackling complex computational challenges. Bridging the gap between SNN models and neuromorphic hardware realizations, key innovations include TrueNorth's~\cite{merolla2014million,debole2019truenorth} scalable, non-von Neumann architecture, the Tianjic chip's~\cite{pei2019towards} hybrid approach blending SNNs and ANNs, and Loihi's~\cite{davies2018loihi} manycore processor with on-chip learning. These developments signify a paradigm shift towards massively parallel, energy-efficient computing platforms, emphasizing algorithm-hardware co-design.

\begin{figure}[ht]
    \centering
    \includegraphics[width=0.85 \linewidth]{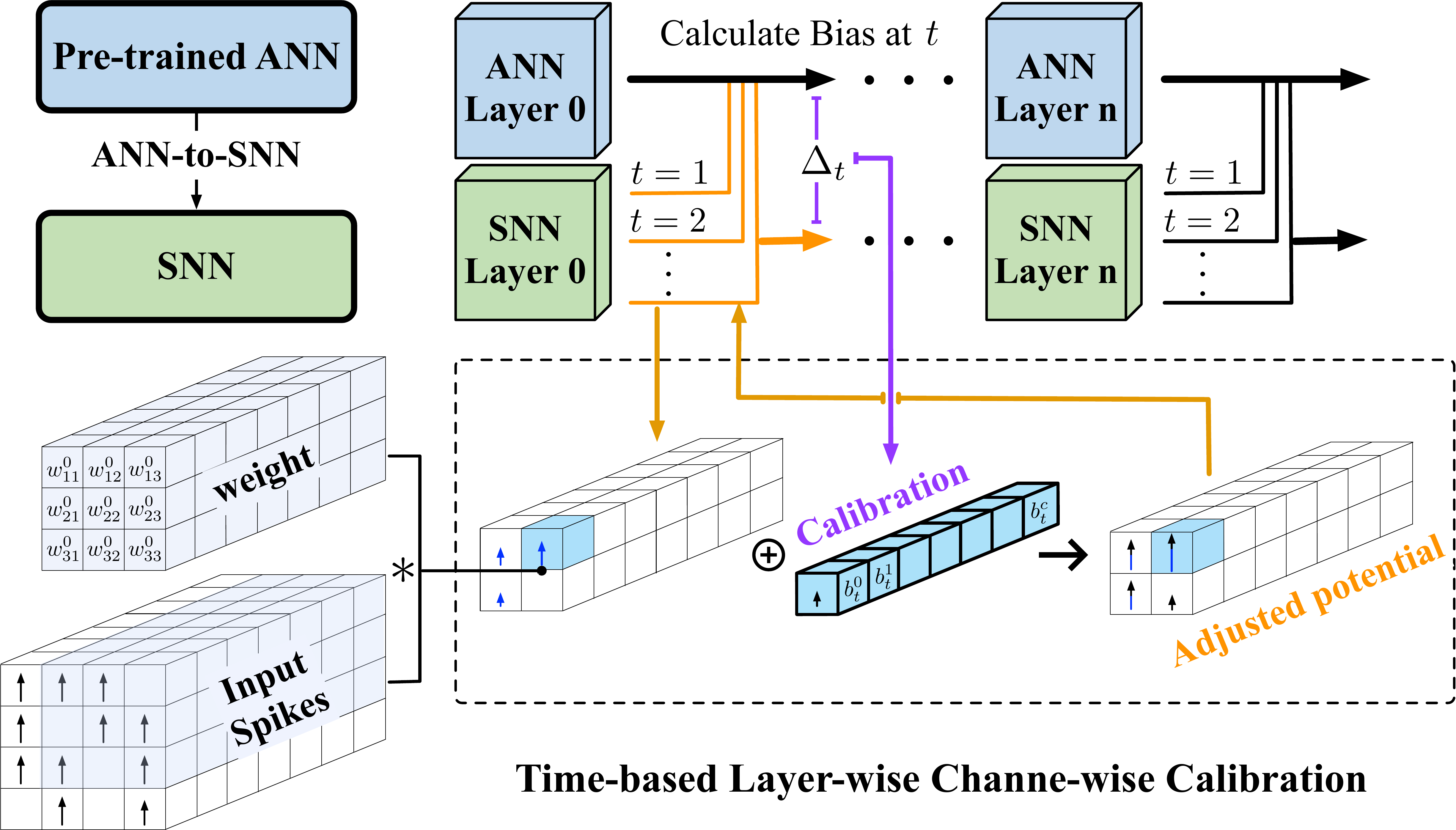}
    \caption{
        Overview of our proposed Forward Temporal Bias Correction (\method) method for ANN-SNN conversion. This approach calibrates time-based channel-wise bias terms (\(b_t\)) by dynamically adjusting membrance potential based on the temporal activation patterns observed in the pre-trained ANN. These adjustments are distributed across timesteps from \(t = 1\) to \(t = T\), ensuring the temporal precision of spike dynamics in SNNs is maintained.
    }
    \label{fig:overview}
\end{figure}

ANN-SNN conversion is a lightweight SNN training strategy among several others, including direct training, Spike Timing Dependent Plasticity (STDP)~\cite{bi2001synaptic, gerstner2002spiking, kempter1999hebbian, graupner2012calcium, rathi2018stdp}, hybrid learning~\cite{DBLP:conf/iclr/RathiSP020} and bio-plausible local learning~\cite{DBLP:journals/corr/abs-1811-10766, mostafa2018deep}. ANN-SNN conversion converts pre-trained ANNs to SNNs to align the firing rates of spiking neurons (Integrate-and-Fire (IF)~\cite{rueckauer2016theory}) with the Rectified Linear Unit (ReLU) activations in analog neurons. This conversion diverges into two subcategories: (1) methods that involve no modification to ANN structures prior to conversion but require bias and weight calibration afterwards such as SNN-Calibration~\cite{li2021free}; (2) methods that retrain ANNs from scratch due to the use of non-standard ReLU-like activations~\cite{DBLP:conf/iclr/DengG21, DBLP:conf/iclr/BuFDDY022, DBLP:conf/icml/JiangAMX023, DBLP:conf/aaai/HaoBD0Y23, DBLP:conf/iclr/HaoDB0Y23} to better approximate the IF function, such as quantization clip-floor-shift ReLU~\cite{DBLP:conf/iclr/BuFDDY022} and ReLU with threshold and shift~\cite{DBLP:conf/iclr/DengG21}. The conversion introduces three main errors~\cite{DBLP:conf/iclr/BuFDDY022}: clipping error, quantization error, and unevenness error. To further reduce conversion errors, other optimization strategies focus on weight normalization~\cite{Diehl2015Fast}, firing thresholds~\cite{rueckauer2017conversion, boj2024data}, optimizing IF biases~\cite{DBLP:conf/iclr/DengG21}, initial membrane potential~\cite{bu2022optimized, boj2024data}, and residual membrane potential~\cite{DBLP:conf/aaai/HaoBD0Y23} to enhance the accuracy and reduce the simulation steps.

Existing ANN-SNN conversion approaches have not effectively resolved unevenness error~\cite{DBLP:conf/iclr/BuFDDY022}. This error stems from the temporal variability in input spike sequences. First, SNN-Calibration~\cite{li2021free} uses the average of firing rates to match ANN's activations, which does not accurately capture the spike timing and exploit the temporal dynamics of spikes during the calibration process. When the timing of the arrival spikes changes, it can alter the output firing rates, leading to discrepancies in mapping ANN activations to SNN firing rates. Moreover, methods~\cite{DBLP:conf/iclr/BuFDDY022,DBLP:conf/iclr/DengG21} employing ReLU-like activations overlook the potential for post-conversion calibration, incorporating such calibration could notably enhance model accuracy. Also, conversion methods such as SlipReLU~\cite{DBLP:conf/icml/JiangAMX023}, QCFS~\cite{DBLP:conf/iclr/BuFDDY022} and RTS~\cite{DBLP:conf/iclr/DengG21} necessitate modifications to the original ANN's ReLU activation function to retrain the ANN models, potentially compromising ANN's accuracy~\cite{DBLP:conf/icml/JiangAMX023} and generalizability. Finally, SNN-Calibration~\cite{li2021free} with layer-wise weight calibration is more computationally demanding and memory-intensive than bias calibration. We seek to bridge this gap by offering a nuanced correction mechanism that dynamically adjusts biases in accordance with temporal activation discrepancies observed between ANNs and SNNs at each timestep, thus aiming to reduce the unevenness error and preserve the temporal fidelity. 

In this work, we propose the Forward Temporal Bias Correction, a post-conversion calibration approach to ANN-SNN conversion that addresses the differences between ANN outputs and SNN outputs at each time step. We aim to correct biases in an SNN by comparing the activations (outputs) of the SNN with those of an equivalent ANN at different timesteps. Figure~\ref{fig:overview} provides an overview of our proposed \method method for ANN-SNN conversion, illustrating how time-dependent channel-wise bias terms (\(b_t\)) are calibrated by dynamically adjusting them based on the temporal activation patterns observed in the pre-trained ANN. Adjustments are distributed across timesteps from \(t = 1\) to \(t = T\) by incorporating a temporal bias correction layer-by-layer at each timestep, ensuring the temporal precision of spike dynamics in SNNs is maintained. We demonstrate the effectiveness of \method through rigorous evaluations on CIFAR-10/100 and ImageNet datasets, showing superior performance compared to existing state-of-the-art. Our main contributions, encapsulated in this work, include:
\begin{itemize}
\item The introduction of the \method, an approach in ANN-SNN conversion that effectively addresses temporal bias, enhancing the accuracy of SNNs at each time step throughout the simulation time.
\item Theoretical foundation of the proposed method, culminating in the finding that through temporal bias calibration, the expected error of ANN-SNN conversion can be reduced to zero. We further provide a heuristic algorithm for finding temporal biases only in the forward pass.
\item The extensive experimental validation showing that our method surpasses existing conversion methods in terms of accuracy on CIFAR-10/100 and ImageNet datasets.
\end{itemize}

\section{Related Work}

The training methods of SNNs can be divided into two categories. ANN-SNN conversion involves reusing parameters of pre-trained ANNs to SNNs with the objective of minimizing accuracy degradation by aligning the ReLU activation outputs of ANNs with the firing rates of SNNs. The first work~\cite{cao2015spiking} demonstrates the feasibility of replacing ReLU activations with spiking neurons and paves the way for ANN-SNN conversion. Weight-normalization methods~\cite{Diehl2015Fast} extended this effort to bolster accuracy in converted SNNs across extensive time-steps. Reset-by-subtraction~\cite{rueckauer2017conversion} was introduced to effectively mitigate conversion errors and enhance the temporal accuracy of SNNs. A soft-reset mechanism~\cite{han2020rmp} was further refined to integrate dynamic threshold adjustments~\cite{stockl2021optimized, ho2021tcl, DBLP:journals/tnn/WuCZLLT23} to notably improve SNN performance. Rate-coding and time-coding~\cite{DBLP:conf/nips/KimKK20a} were adopted to train SNNs with fewer spikes. A tailored weight-normalization method~\cite{Sengupta2018Going} specifically for SNN operations, ensure near-lossless ANN-SNN conversion, while another direct conversion approach that addressed conversion errors by replacing ReLU activations in pre-trained ANNs with a Rate Norm Layer~\cite{DBLP:conf/ijcai/DingY0H21} and other novel activation functions~\cite{DBLP:journals/tnn/WuCZLLT23, DBLP:conf/iclr/BuFDDY022, DBLP:conf/icml/JiangAMX023}. The precision of conversion was further improved by calibrating weights and biases~\cite{li2021free, DBLP:conf/iclr/DengG21} through fine-tuning along with optimizing the initial membrane potential~\cite{li2021free} for enhanced performance in limited time-step scenarios. A quantization clip-floor-shift activation function (QCFS)~\cite{DBLP:conf/iclr/BuFDDY022} was introduced to minimize conversion error and unevenness, leading to improved SNN performance within shorter time-steps. The introduction of burst spikes~\cite{DBLP:conf/ijcai/Li022} and a memory function~\cite{wang2022signed}, alongside a unified optimization framework utilizing the SlipReLU activation function~\cite{DBLP:conf/icml/JiangAMX023} for zero conversion error within a specific shift value range, was noted. Additionally, strategies such as residual membrane potential~\cite{DBLP:conf/aaai/HaoBD0Y23} and initial membrane potential adjustment~\cite{DBLP:conf/iclr/HaoDB0Y23} have been developed to reduce unevenness error, significantly boosting SNN efficiency and accuracy. 

Direct training allows SNNs to account for precise spike timing and operate within a few timesteps. The success of direct training is attributed to the development of spatial-temporal backpropagation through time (BPTT) and surrogate gradient methods, enabling efficient training of SNNs' temporal spike sequence over time. However, these methods face challenges with deep architectures due to gradient instability, as well as high computational and memory costs that unfold over simulation timesteps. Various gradient-based methods leverage surrogate gradients~\cite{DBLP:conf/iclr/OConnorGRW18, zenke2018superspike, wu2018spatio, bellec2018long, fang2021deep, fang2021incorporating, zenke2021remarkable, mukhoty2024direct} to handle the non-differentiable spike functions. Direct training focuses on optimizing not only the synaptic weights but also dynamic parameters such as firing thresholds~\cite{wei2023temporal} and leaky factors~\cite{rathi2021diet}. Novel loss functions such as rate-based counting loss~\cite{zhu2024exploring} and distribution-based~\cite{guo2022recdis} loss are devised to provide adequate positive overall gradients and rectify the distribution of membrane potential during the propagation of binary spikes. Moreover, hybrid training methods~\cite{DBLP:journals/corr/abs-2205-07473} combine ANN-SNN conversion with BPTT to obtain higher performance under low latency. Recently, Ternary Spike~\cite{guo2023ternary} has been adopted to enhance information capacity without sacrificing energy efficiency. The reversible SNN~\cite{zhang2023memory} has been proposed to reduce the memory cost of intermediate activations and membrane potentials during training. Inspired by these approaches, this paper incorporates the concept of rectifying distribution of membrane potential with temporal biases and aims to reduce unevenness error and obtain high-accuracy SNNs.

\section{ANN-SNN conversion pipeline}

\subsection{IF spiking neuron}
Spiking Neural Networks (SNNs) often employ integrate-and-fire (IF) neurons, a popular choice in converting Artificial Neural Networks (ANNs) to SNNs. At each time step, these neurons integrate weighted spike inputs from preceding layers, gradually accumulating membrane potential. If this potential surpasses the membrane threshold, the neuron fires a single spike (weighted by the threshold) and resets its potential by subtracting the threshold value, as described in \cite{rueckauer2016theory}. This process repeats at discrete time steps throughout the entire simulation.  

To precisely represent the neuronal dynamics and the ANN-to-SNN conversion process, we introduce the following notation: $v[t]$ denotes the membrane potential (voltage) at time step $t$, $s[t]$ is a binary variable (1 or 0) indicating the presence of a spike at time step $t$, $W$ represents the network weight, while $\theta$ denotes the membrane threshold. Furthermore, we will use superscripts to signify the layer index, and subscripts to represent the neuron index within a specific layer. Finally, $T$ represents the total simulation time. With this, the temporal dynamics of an IF spiking neuron can be represented with
\begin{align}
    v^{(\ell)}[t+1] &= v^{(\ell)}[t]+W^{(\ell)}\theta^{(\ell-1)}s^{(\ell-1)}[t+1] - \theta^{(\ell)}s^{(\ell)}[t],\label{eq voltage}\\
    s^{(\ell)}[t+1] &= H(v^{(\ell)}[t+1]-\theta^{(\ell)}),
\end{align}
where $H$ is the Heaviside function.

\subsection{Basis of ANN-SNN conversion} \label{sec ANN-SNN pipeline}
\paragraph{General idea}
Summing up the previous equations through $t=1,\dots,T$, rearranging terms, and scaling with $T$, we obtain:
\begin{equation}\label{eq conversion}
    \frac{1}{T}\left(\theta^{(\ell)}\sum_{t=1}^Ts^{(\ell)}[t]\right) = W^{(\ell)}\frac{1}{T}\left(\theta^{(\ell-1)}\sum_{t=1}^Ts^{(\ell-1)}[t]\right) -\frac{1}{T}v^{(\ell)}[T],
\end{equation}
where $v^{(\ell)}[T]$ is the residue membrane potential after the spiking at the last time step $T$. At this point, we may notice that the  output of the spiking layer is non-negative, which suggests the similarity with the \relu activation function. Furthermore, if we suppose that the ANN output $a^{(\ell-1)}$ at layer $(\ell-1)$ is well approximated with the SNN output $\frac{1}{T}\left(\sum_{t=1}^Ts^{(\ell-1)}[t]\right)$, then applying the same weights $W^{(\ell)}$ on both outputs and using the \relu activation function on the ANN side, we obtain that the $(\ell)$ layer output of SNN approximates well the $(\ell)$ layer output of ANN, with error controlled by the term $\frac{1}{T}v^{(\ell)}[T]$. More precise relations are given in the supplementary material. 

\paragraph{Weights and threshold initialization} ANN-SNN conversion is based on using the pre-trained ANN model to initialize weights of the SNN model following the same architecture, such that the outputs of the SNN approximate well the ANN outputs, following the general idea above (see \cite{Diehl2015Fast, cao2015spiking}). Besides copying the weights, from equation \eqref{eq conversion} it becomes apparent that proper threshold initialization of SNNs to reduce the approximation error. This is usually based on some statistics coming from the distribution of the activation values of the ANN, such as taking the maximum activation or 99 percentile statistics \cite{rueckauer2017conversion}, where a simple grid search algorithm is used \cite{li2021free}, or a more elaborate statistical analysis is conducted like in \cite{boj2024data}.

\paragraph{Weight scaling} Furthermore, for the implementation of SNNs on neuromorphic hardware, one needs to scale the thresholds so that they become 1, which results in avoiding any multiply–accumulate (MAC) operation in the network and only accumulate (AC) operations remain, which is crucial for energy efficiency. In the work of ~\cite{diehl2016conversion, rueckauer2017conversion, li2021free}, weight normalization is described to standardize the representation of spikes across different layers in the neural network. Specifically, this technique converts spike values from \( \{0, \theta^{(\ell-1)}\} \) to a uniform \( \{0,1\} \) format in each layer by adjusting the weights according to the formula \( \mathbf{W}^{(\ell)} = ( \theta^{(\ell-1)} / \theta^{(\ell)} ) \mathbf{W}^{(\ell)} \). We note that we can scale the weights in the identical way in the initial ANN without changing its accuracy. 
\paragraph{Input encoding} We will consider constant encoding throughout the paper. That is, the input $x$ is passed through the SNN network at each time step during the simulation time, and it is the same input/sample on which the corresponding ANN has been trained. 

\paragraph{ANN-SNN pipeline} Finally, we will refer to the previous steps, that is: starting with a pre-trained ANN with \relu or \relu like activations, weight and thresholds initialization of an SNN following the same architecture, and finally, weight scaling for both ANN and SNN together with constant input encoding in SNNs as the \textbf{ANN-SNN pipeline}. 


\section{\method: Forward Temporal Bias Calibration}
We propose a time-dependent bias to compensate for the approximation errors when performing ANN-SNN conversion. In a nutshell, the voltage accumulation of spiking neurons, i.e. equation \eqref{eq voltage}, is changed to
\begin{equation}
    v^{(\ell)}[t+1] = v^{(\ell)}[t]+W^{(\ell)}\theta^{(\ell-1)}s^{(\ell-1)}[t+1] - \theta^{(\ell)}s^{(\ell)}[t] + b^{(\ell)}[t],
\end{equation}
where the bias $ b^{(\ell)}[t]$ will be calculated only in the forward manner, in such a way that the expected outputs of spiking neurons and the corresponding ANN neurons are equal. In what follows, we provide theoretical motivation for our approach, as well as the practical implementation of an algorithm to calculate terms $ b^{(\ell)}[t]$. We emphasize the computational efficiency of the whole procedure, as the temporal bias is calculated in the forward pass.

\subsection{Theoretical considerations}
The following general result is the basis for FTBC.
\begin{restatable}{proposition}{mainProp}\label{prop main}
    Let $\lambda$ be a continuous distribution with compact support $[a,c]$, and let $\alpha\in[0,1]$. Then, there exists $b^*\in \mathbb{R}$, such that
\begin{equation}\label{eq prop main}
    \mathbb{E}_{x\sim \lambda}[H(x+b^*-1)]=\alpha.
\end{equation}  
If $\lambda$ is strictly positive on $(a,c)$, then $b^*$ is unique.
\end{restatable}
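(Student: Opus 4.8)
The plan is to reduce the statement to a one-dimensional Intermediate Value Theorem argument. First I would rewrite the quantity $\mathbb{E}_{x\sim\lambda}[H(x+b-1)]$ as a tail probability: since $H(x+b-1)=1$ exactly when $x\ge 1-b$, and since $\lambda$ is continuous (so the single threshold point $\{x=1-b\}$ carries zero mass, making the value of $H$ at $0$ irrelevant), we have $\mathbb{E}_{x\sim\lambda}[H(x+b-1)] = \mathbb{P}_{\lambda}(x\ge 1-b) = 1 - F(1-b)$, where $F$ denotes the cumulative distribution function of $\lambda$. Defining $g(b) := 1 - F(1-b)$, the goal becomes showing that $g$ attains the value $\alpha$.

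Next I would record three properties of $g$. Continuity of $g$ follows directly from continuity of $F$ (which holds because $\lambda$ is a continuous distribution) composed with the affine map $b\mapsto 1-b$. Monotonicity: $g$ is non-decreasing, because increasing $b$ lowers the threshold $1-b$ and can therefore only increase the captured tail mass. Boundary behaviour: because the support is the compact interval $[a,c]$, we have $F(1-b)=1$ whenever $1-b\ge c$ and $F(1-b)=0$ whenever $1-b\le a$; equivalently $g(b)=0$ for $b\le 1-c$ and $g(b)=1$ for $b\ge 1-a$. In particular $g(1-c)=0$ and $g(1-a)=1$.

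With these in hand the existence claim is immediate: $g$ is continuous on $\mathbb{R}$ with $g(1-c)=0$ and $g(1-a)=1$, so by the Intermediate Value Theorem, for any target $\alpha\in[0,1]$ there is some $b^*\in[1-c,1-a]$ with $g(b^*)=\alpha$. The endpoint cases $\alpha\in\{0,1\}$ are covered directly by $b^*=1-c$ or $b^*=1-a$.

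For uniqueness, suppose the density $f$ of $\lambda$ is strictly positive on $(a,c)$. Then $F$ is strictly increasing on $[a,c]$, so $g$ is strictly increasing on $[1-c,1-a]$ (its derivative there is $f(1-b)>0$), which forces the solution to be unique whenever $\alpha\in(0,1)$. I expect the only delicate points to be bookkeeping rather than substance: making the Heaviside boundary/measure-zero reduction precise, and observing that strict uniqueness genuinely holds only for $\alpha\in(0,1)$, since at $\alpha\in\{0,1\}$ the function $g$ is flat outside $[1-c,1-a]$ and so admits a whole ray of solutions; uniqueness should therefore be read on the nondegenerate range. No step presents a real obstacle, as the whole argument is just a continuity-plus-monotonicity packaging of the Intermediate Value Theorem.
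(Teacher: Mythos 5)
Your proof is correct and follows essentially the same route as the paper's: rewrite the expectation as $1-\Lambda(1-b)$ via the CDF, apply continuity and the boundary values $\Lambda(a)=0$, $\Lambda(c)=1$ for existence, and strict monotonicity of the CDF for uniqueness. Your added observation that uniqueness genuinely holds only for $\alpha\in(0,1)$ (since for $\alpha\in\{0,1\}$ an entire ray of biases works) is a small but valid refinement that the paper's statement glosses over.
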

We provide a proof and detailed discussion of the proposition in the supplementary material, but we note here that the previous result is a statement about SNNs in disguise. Namely, a typical context where we apply it is on the spiking neurons of an SNN obtained through ANN-SNN pipeline. Let us consider one such neuron in layer $(\ell)$ and position $i$. At a time step $t$, before the spiking, the values of the membrane potential $v^{(\ell)}_i[t]$, obtained when a training dataset is passed through the network, form a distribution, say $\lambda[t]$. Then, if we denote by $\bar{a}^{(\ell)}_i$ the expected output of the corresponding ANN neuron, Proposition \ref{prop main} tells us that there is a $b^{(\ell)*}_i[t]$, such that the expected output of the SNN neuron at time step $t$ is $\bar{a}^{(\ell)}_i$. This can be formalized in the following.

\begin{wrapfigure}{rh}{.45\textwidth}
    \begin{center}
    \includegraphics[width=0.45 \textwidth]{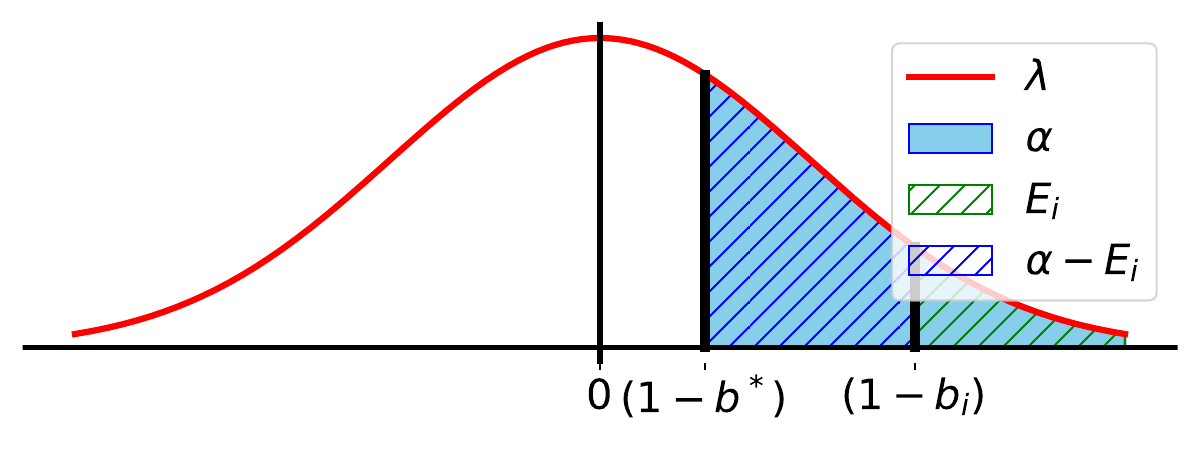}
    \end{center}
    \caption{
        A heuristic algorithm to iteratively find $b^*$: At each iteration, one estimates the difference between the desired $\alpha$ and current $E_i$ expected values, and estimates the next $b_{i+1}$ with $b_i+c(\alpha-E_i)$, where $c$ is some positive constant. 
    }
    \label{fig dist}
\end{wrapfigure}

\begin{restatable}{theorem}{mainThm}\label{thm main}
    Let $\mathcal{N}_A$ and $\mathcal{N}_S$ be an ANN and SNN, respectively, obtained through the ANN-SNN pipeline (see Section \ref{sec ANN-SNN pipeline}). Then, there exists temporal biases $b^{(\ell)}_{i}[t]$, $l=1,\dots, L$, $i=1,\dots, N_{\ell}$ such that the expected output of $\mathcal{N}_S$ after each time step coincides with the expected output of $\mathcal{N}_A$. In other words,
    $$
    \mathbb{E}_{x\sim \lambda}\left[\mathcal{N}_A(x)-\mathcal{N}_S(x)[t]\right] = 0.
    $$
\end{restatable}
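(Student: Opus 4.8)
The plan is to prove the theorem by a layer-by-layer construction and, within each layer, a time-step-by-time-step construction, invoking Proposition \ref{prop main} once at each stage to pin down a single temporal bias. The key observation is that, after the weight-scaling step of the ANN-SNN pipeline, all thresholds equal $1$, so the spike emitted by neuron $i$ in layer $(\ell)$ at time $t+1$ can be written as $s^{(\ell)}_i[t+1] = H(\tilde v^{(\ell)}_i[t+1] + b^{(\ell)}_i[t] - 1)$, where $\tilde v^{(\ell)}_i[t+1]$ denotes the membrane potential accumulated just before the bias is added. This is exactly the functional form in Proposition \ref{prop main}, with the role of $x$ played by $\tilde v^{(\ell)}_i[t+1]$ and the role of $b^*$ by $b^{(\ell)}_i[t]$.

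First I would fix the matching target. Since after scaling the ReLU-type activations of $\mathcal{N}_A$ lie in $[0,1]$, the expected activation $\bar a^{(\ell)}_i := \mathbb{E}_{x\sim\lambda}[a^{(\ell)}_i(x)]$ is a number in $[0,1]$ and serves as the $\alpha$ of the proposition. I would then process the layers in order $\ell = 1,\dots,L$ and, inside each layer, the time steps in order. At the $(\ell, t, i)$ stage, all biases of earlier layers and of earlier time steps within layer $(\ell)$ have already been chosen, so the distribution $\lambda^{(\ell)}_i[t]$ of the pre-bias potential $\tilde v^{(\ell)}_i[t+1]$ over $x\sim\lambda$ is completely determined and does not depend on the bias about to be selected. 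Invoking Proposition \ref{prop main} with this distribution and $\alpha = \bar a^{(\ell)}_i$ yields a bias $b^{(\ell)}_i[t]$ for which $\mathbb{E}[s^{(\ell)}_i[t+1]] = \bar a^{(\ell)}_i$. Carrying this out for every neuron, time step and layer forces the expected SNN output to equal the expected ANN output at every layer; specializing to $\ell = L$ gives precisely $\mathbb{E}_{x\sim\lambda}[\mathcal{N}_A(x) - \mathcal{N}_S(x)[t]] = 0$. I would stress that the sequential ordering here is a genuine dependency of the dynamics rather than an inductive hypothesis on matching: choosing the bias at stage $(\ell,t,i)$ requires only that the target lie in $[0,1]$ and that $\lambda^{(\ell)}_i[t]$ satisfy the hypotheses of the proposition, neither of which presupposes that earlier layers were matched.

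The hard part will be verifying the hypotheses of Proposition \ref{prop main} --- continuity and compact support of $\lambda^{(\ell)}_i[t]$ --- uniformly across layers. Compactness is straightforward, following from bounded inputs, bounded weights and the finite number of time steps, since each accumulated potential is then confined to a bounded interval. Continuity is the delicate point: at the first layer the constant encoding feeds the continuous input $x$ directly into the accumulation, so $\tilde v^{(1)}_i[t]$ is a piecewise-linear, hence continuous, function of $x$ and its law is continuous; but from the second layer onward the incoming signal is a weighted sum of binary spikes $s^{(\ell-1)}[\tau]$, which is piecewise constant in $x$, so $\tilde v^{(\ell)}_i[t]$ is constant on each cell of the induced partition and its law acquires atoms. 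I would therefore isolate this as the central modeling assumption and state the continuity of the membrane-potential distributions explicitly as a hypothesis, reading the theorem in the idealized regime natural to the calibration procedure, and deferring to the supplementary material the atomic case, in which the achievable values of $\mathbb{E}[H(\tilde v + b - 1)]$ form a step function and the conclusion holds only up to the corresponding quantization gap.
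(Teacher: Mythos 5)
Your proof takes essentially the same route as the paper's: the precise version of the theorem in the supplementary material likewise assumes that the pre-firing membrane potential, viewed as a function of the input, is a continuous random variable, and then applies Proposition \ref{prop main} neuron by neuron with $\alpha$ equal to the expected ANN activation. Your diagnosis of why continuity can fail beyond the first layer (spike inputs are piecewise constant in $x$, so the potential's law acquires atoms) matches the caveat the paper itself raises in its ``Theoretical results in practice'' discussion, and your decision to promote continuity to an explicit hypothesis is exactly what the paper does.
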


The proof and other details concerning the previous result are provided in the supplementary material, but for the time being, we can unravel a special case that has been present in the literature. Namely, suppose we consider one ANN and one SNN neuron, where the ANN neuron receives input data with PDF $\lambda$ which is a uniform distribution with support $[0,1]$. Suppose that the SNN neuron receives the same inputs with constant encoding. Then, the temporal bias hinted in the Proposition \ref{prop main} and the previous theorem in the first time step is $b[1]=\frac{1}{2}$ (c.f. \cite{DBLP:conf/iclr/DengG21, bu2022optimized}).


\subsection{Practical aspects}

With motivation coming from the theoretical results from the previous section, we propose a heuristic algorithm for finding the temporal biases.  To start with, we explain the idea with the setting and notation of Proposition \ref{prop main}. Namely, suppose that at each iteration $i$, we receive a batch of samples from $\lambda$, say $B_i$, and suppose that the current estimation for $b^*$ is $b_i$.  We use the batch $B_i$ to estimate the left-hand side of \eqref{eq prop main}, $E_i:=\mathbb{E}_{x\sim B_i}[H(x+b_i-1)]$. Then, the next $b_{i+1}$ is obtained by adding to $b_i$ the signed difference $\alpha-E_i$ weighted with some positive hyperparemeter.  One may refer to Figure \ref{fig dist} for a pictorial representation of this idea. 

\begin{algorithm}[htbp]
\caption{Layer-wise Time-based Bias Correction}
\label{algo:bias-correction}
\begin{algorithmic}[1]
\REQUIRE ANN Model $\mathcal{M_\text{ANN}}$; SNN Model $\mathcal{M_\text{SNN}}$; Sample training dataset $\mathcal{D}$.
\FUNCTION{\textsc{BiasCorrEachTimeStep}($\mathcal{{\ell}_{\text{ANN}}}, \mathcal{{\ell}_{\text{SNN}}}, t, \mathbf{x}^{(\ell-1)}, \mathbf{s}^{(\ell-1)}(t) $)}    
    \STATE $\mathbf{a}^{(\ell)}, \mathbf{s}^{(\ell)}(t)  \gets \mathcal{{\ell}_{\text{ANN}}}(\mathbf{x}^{(\ell-1)}), \mathcal{{\ell}_{\text{SNN}}}(\mathbf{s}^{(\ell-1)}(t))$    
    \STATE {$\mathbf{B}^{(\ell)}(t) \gets \textsc{ChannelWiseMean} {\left( \mathbf{s}^{(\ell)}(t) - \mathbf{a}^{(\ell)} \right)} $}
    \STATE \textbf{return} $\mathbf{B}^{(\ell)}(t)$ 
\ENDFUNCTION
\FUNCTION{\textsc{BiasCorr}($\mathcal{M_{\text{ANN}}}, \mathcal{M_{\text{SNN}}}, \mathcal{D}, \alpha$)}
    \STATE $\mathbf{B}^{(\ell)}(t=1..T) \gets 0$ \COMMENT{Initialize time-based biases}
    \FORALL{$\mathcal{{\ell}_{\text{ANN}}}, \mathcal{{\ell}_{\text{SNN}}} \in \mathcal{M_{\text{ANN}}}, \mathcal{M_{\text{SNN}}}$}
        \FORALL{$(\mathbf{x}, \mathbf{y}) \in \mathcal{D}$}
            \FOR{$t = 1$ \textbf{to} $T$}
                \STATE Capture $\mathbf{x}^{(\ell-1)}$, $\mathbf{s}^{(\ell-1)}(t)$ from $\mathcal{M_\text{ANN}(\mathbf{x})}, \mathcal{M_\text{SNN}(\mathbf{x})}$
                \STATE $\mathbf{B'}^{(\ell)}(t) \gets \textsc{BiasCorrEachTimeStep}(\mathcal{{\ell}_{\text{ANN}}}, \mathcal{{\ell}_{\text{SNN}}}, t, \mathbf{x}^{(\ell-1)}, \mathbf{s}^{(\ell-1)}(t))$ 
                \STATE $\mathbf{B}^{(\ell)}(t) \gets \mathbf{B}^{(\ell)}(t) + \alpha \mathbf{B'}^{(\ell)}(t)$ 
                \STATE \COMMENT{Update the membrane potential of the SNN layer $\mathcal{\ell}_{\text{SNN}}$ immediately.}
            \ENDFOR
        \ENDFOR
    \ENDFOR    
\ENDFUNCTION
\STATE \COMMENT{Perform forward propagation for a single SNN layer}
\FUNCTION{\textsc{SpikeLayerForward}($\mathcal{\ell}, \mathbf{s}^{(\ell)}(t)$)} 
    \STATE $\mathbf{x}^{(\ell)} \gets \mathcal{\ell}(\mathbf{s}^{(\ell)}(t))$ \COMMENT{Compute activation from input}
    \STATE \COMMENT{Update membrane potential with activation}
    \STATE $\mathbf{v}_{\text{temp}}^{(\ell)}(t + 1) \gets \mathbf{v}^{(\ell)}(t) + \mathbf{x}^{(\ell)}$
    \STATE {\COMMENT{Apply channel-wise time-dependent bias $\mathbf{B}_t^{(\ell)}$}}
    \STATE {$\mathbf{v}_{\text{temp}}^{(\ell)}(t + 1) \gets \mathbf{v}_{\text{temp}}^{(\ell)}(t + 1) + \mathbf{B}_t^{(\ell)}$} 
    \STATE $\mathbf{s}^{(\ell+1)}(t) \gets \left( \mathbf{v}_{\text{temp}}^{(\ell)}(t + 1) \geq V_{\text{thr}}^{(\ell)} \right) \times V_{\text{thr}}^{(\ell)}$
    \STATE $\mathbf{v}^{(\ell)}(t + 1) \gets \mathbf{v}_{\text{temp}}^{(\ell)}(t) - \mathbf{s}^{(\ell+1)}(t)$ 
\STATE \textbf{return} $\mathbf{s}^{(\ell+1)}(t)$
\ENDFUNCTION
\end{algorithmic}
\end{algorithm}

Finally, the pseudocode provided in Algorithm~\ref{algo:bias-correction} outlines the procedure for our proposed \method. In an effort to better balance inference error and temporal error, we use time-based channel-wise bias calibration to obtain the optimal bias settings under different simulation lengths \( T \). A key component of our methodology involves capturing the output activations of the SNN network at every timestep, layer-wise and channel-wise, and further comparing this output against its corresponding counterpart ANN, focusing on aligning the two outputs more closely. By calculating the mean differences across channels for each time step, we obtain a detailed bias map that guides our calibration efforts.

\section{Experiments}

In this section, we verify the effectiveness and efficiency of our proposed \method. We compare our method with other state-of-the-art approaches for image classification tasks via converting ResNet-20, ResNet-34~\cite{He2016Deep}, VGG-16~\cite{simonyan2014very} on CIFAR-10~\cite{lecun1998gradient, krizhevsky2010cifar}, CIFAR-100~\cite{krizhevsky2009learning}, and ImageNet~\cite{deng2009imagenet}. We evaluate our approach compared with previous state-of-the-art ANN-SNN conversion methods, including TSC~\cite{han2020deep}, RMP~\cite{han2020rmp}, RNL~\cite{DBLP:conf/ijcai/DingY0H21}, ReLU-Threshold-Shift (RTS)~\cite{DBLP:conf/iclr/DengG21}, SNN Calibration with Advanced Pipeline (SNNC-AP)~\cite{li2021free}, ANN-SNN conversion with Quantization Clip-Floor-Shift activation function (QCFS)~\cite{DBLP:conf/iclr/BuFDDY022}, SNM~\cite{wang2022signed}, OPI~\cite{bu2022optimized}, and SlipReLU~\cite{DBLP:conf/icml/JiangAMX023}. 

In general, when initiating our SNNs, we follow the ANN-SNN pipeline outlined in Section \ref{sec ANN-SNN pipeline}. In combination with other methods, we adapt their particularities of initiating membrane potential and thresholds. We further calibrate such obtained models with various hyperparameter from Algorithm~\ref{algo:bias-correction} and batch-sizes. Then, we report the accuracies obtained after a certain number of calibration iterations (which do not have to be the best obtained). We further refer to the Appendix for details on all experimental setups and configurations. 

\subsection{Comparison with the State-of-the-art} \label{subsec:cmp_existing_methods}
\subsubsection{ImageNet dataset}
Table~\ref{tab:ann-snn-imagenet} shows the performance comparison of the proposed \method with the state-of-the-art ANN-SNN conversion methods on ImageNet.  When compared with the baselines, our method \method achieves superior performance throughout all simulation time steps. For example, on VGG-16 and $T=16$, our method achieves accuracy 71.19\% which some of the other baselines are only capable of reaching in 4 times longer latency $T=64$. Furthermore, in latency $T=128$, our method is capable of reaching the original ANN performance (with 0.7\% drop in VGG16 and 0.38\% drop in ResNet-34), which none of the baselines comes close to. 

\begin{table}[htbp]
\caption{Comparison between our proposed method \method and other ANN-SNN conversion methods on ImageNet dataset.}
\label{tab:ann-snn-imagenet}
\centering
\scalebox{0.95}
{
\begin{threeparttable}
\begin{tabular}{@{}ccccccccc@{}}
\toprule
Architecture & Method & ANN & T=4 & T=8 & T=16 & T=32 & T=64 & T=128\\ 
\toprule
\multirow{4}{*}{ResNet-34}
& SNNC-AP\tnote{*}  & 75.66 & -- & -- & -- & 64.54 & 71.12 & 73.45 \\
& QCFS & 74.32 & -- & -- & 59.35 & 69.37 & 72.35 & 73.15 \\
\cmidrule{2-9}
& \textbf{Ours\tnote{*} } & 75.66 & {13.70} & {38.55} & {60.68} & {70.88} & {74.29} & {75.28}  \\
\cmidrule{2-9}
& \textbf{Ours (+QCFS)} & 74.32 & {49.94} & {65.28} & {71.66} & {73.57} & {74.07} & {74.23} \\
\midrule
\multirow{7}{*}{VGG-16}
& SNNC-AP\tnote{*} & 75.36 & -- & -- & -- & 63.64 & 70.69 & 73.32 \\
& SNM\tnote{*} & 73.18 & --  & -- & -- & 64.78 & 71.50 & 72.86 \\
& RTS & 72.16 & -- & -- & 55.80 & 67.73 & 70.97 & 71.89 \\
& QCFS & 74.29 & -- & -- & 50.97 & 68.47 & 72.85 & 73.97 \\
& SlipReLU & 71.99 & -- & -- & 51.54 & 67.48 & 71.25 & 72.02 \\
& OPI\tnote{*} & 74.85 & -- & 6.25 & 36.02 & 64.70 & 72.47 & 74.24 \\
\cmidrule{2-9}
& \textbf{Ours\tnote{*}} & 75.36 &  {51.13} &  {64.2} &  {71.19} &  {73.89} & \gc {74.86} & \gc {75.29}  \\
\cmidrule{2-9}
& \textbf{Ours (+RTS)} & 72.16 & {29.61} & {55.22} & {67.14} & {70.74} & {71.86} & {72.13} \\
& \textbf{Ours (+QCFS)} & 73.91 & \gc {58.83} & \gc {69.31} & \gc {72.98} & \gc {74.05} & {74.16} & {74.21} \\
\bottomrule
\end{tabular}
\begin{tablenotes}
\item[*] Without modification to ReLU of ANNs.
\end{tablenotes}
\end{threeparttable}
}
\end{table}

Although the results show that \method as a standalone method is capable of achieving SOTA results, we further test its versatility and power in combination with other ANN-SNN techniques. For this, we chose RTS and QCFS methods, as they differ from other baselines in that they use modified \relu activations in order to reduce the conversion error. We refer again to Table \ref{tab:ann-snn-imagenet}.  For example, applying \method in conjunction with QCFS on ResNet-34 at $T=16$ results in a significant performance leap from $59.35\%$ to $71.66\%$, marking a $12.31\%$ improvement. A similar enhancement is observed for VGG-16 at the same timestep, where combining \method with QCFS boosts performance from $50.97\%$ to $72.98\%$, reflecting a $22.01\%$ increase. A similar pattern can be observed in the performance of \method together with the RTS method, where in VGG-16 and $T=16$, the original performance of RTS is improved by 11.14\%. This highlights the overall benefits of integrating \method with other optimization techniques, further solidifying its role as a comprehensive solution for ANN-SNN conversion challenges.

\begin{table}[htbp]
\caption{Comparison between our proposed method \method and other ANN-SNN conversion methods on CIFAR-100 dataset.}
\label{tab:ann-snn-cifar100}
\centering
\scalebox{0.85}
{
\begin{threeparttable}
\begin{tabular}{@{}ccccccccccc@{}}
\toprule
Architecture & Method & ANN & T=1 & T=2 & T=4 & T=8 & T=16 & T=32 & T=64 & T=128 \\ 
\toprule
\multirow{9}{*}{ResNet-20}
& TSC\tnote{*} & 68.72 & -- & -- & -- & -- & -- & -- & -- & 58.42 \\
& RMP\tnote{*} & 68.72 & -- & -- & -- & -- & -- & 27.64 & 46.91 & 57.69 \\
& SNNC-AP\tnote{*} & 77.16 & -- & -- & -- & -- & 76.32 & 77.29 & 77.73 & 77.63 \\
& RTS & 67.08 & -- & -- & -- & -- & 63.73 & 68.40 & 69.27 & 69.49 \\
& OPI\tnote{*} & 70.43 & -- & -- & -- & 23.09 & 52.34 & 67.18 & 69.96 & 70.51 \\
& QCFS\tnote{+} & 67.09 & 10.64 & 15.34 & 27.87 & 49.53 & 63.61 & 67.04 & 67.87 & 67.86 \\
& SlipReLU & 50.79 & \gc{48.12} & \gc{51.35} & 53.27 & 54.17 & 53.91 & 53.11 & 51.75 & --\\
\cmidrule{2-11}
& \textbf{Ours\tnote{*}} & 81.89 & 19.96 & 38.19 & \gc{58.08} & \gc{71.74} & \gc{78.80} & \gc{81.09} & \gc{81.79} & \gc{81.94} \\
\cmidrule{2-11}
& \textbf{Ours (+QCFS)} & 67.09 & 15.88 & 25.09 & 42.10 & 58.81 & 65.60 & 67.37 & 67.89 & 67.80 \\
\midrule
\multirow{9}{*}{VGG-16} 
& TSC\tnote{*} & 71.22 & -- & -- & -- & -- & -- & -- & -- & 69.86 \\
& SNM\tnote{*} & 74.13 & -- & -- & -- & -- & -- & 71.80 & 73.69 & 73.95 \\
& SNNC-AP\tnote{*} & 77.89 & -- & -- & -- & -- & -- & 73.55 & 77.10 & \gc{77.86} \\
& RTS & 70.62 & -- & -- & -- & -- & 65.94 & 69.80 & 70.35 & 70.55 \\
& OPI\tnote{*}  & 76.31 & -- & -- & -- & 60.49 & 70.72 & 74.82 & 75.97 & 76.25 \\
& QCFS\tnote{+} & 76.21 & 49.09 & 63.22 & 69.29 & 73.89 & 75.98 & \gc{76.53} & 76.54 & 76.60 \\
& SlipReLU & 68.46 & \gc{64.21} & {66.30} & 67.97 & 69.31 & 70.09 & 70.19 & 70.05 & -- \\
\cmidrule{2-11}
& \textbf{Ours\tnote{*}} & 77.87 & 32.79 & 48.99 & 60.68 & 69.52 & 74.05 & 76.39 & \gc{77.34} & 77.73 \\
\cmidrule{2-11}
& \textbf{Ours (+QCFS)} & 76.21 & 62.22 & \gc{67.77} & \gc{71.47} & \gc{75.12} & \gc{76.22} & 76.48 & 76.48 & 76.48 \\
\bottomrule
\end{tabular}
\begin{tablenotes}
\item[*] Without modification to ReLU of ANNs.
\item[+] Using authors' provided models and code. 
\end{tablenotes}
\end{threeparttable}
}
\end{table}

\subsubsection{CIFAR datasets}

We further evaluate the performance of \method method on CIFAR-100 and CIFAR-10 datasets. We show the results in Tables ~\ref{tab:ann-snn-cifar100} for CIFAR-100 and in the supplementary material for CIFAR-10. 

For CIFAR-100, we detect similar patterns as with the ImageNet dataset. Our method as standalone outperforms all the existing ANN-SNN conversion methods which use ANN models with \relu activation functions. The only exception is SlipReLU method which shows better performance at latencies $T=1,2$ on ResNet-20 and $T=1,2,4$ on VGG-16. However the SlipReLU method uses extensive hyperparameter search in order to obtain high performance, which induces significant computational costs but at the same time degrades ANN performance. When comparing our method with ANN-SNN conversion methods which use non-\relu  activations, e.g. QCFS and RTS, our method is constantly outperforming RTS on ResNet-20 and VGG16, while on QCFS the method alone is not able to outperform the baseline in lower latency on VGG16 (while \method is outperforming on ResNet-20). This is due to the continuous outputs of the \relu activation that we used in our ANN model, compared to the discrete outputs of steps functions used in QCFS. However, QCFS baseline suffers from necessity to train ANN models from scratch with custom activations, while \method is applicable to any ANN model with \relu-like activation. Furthermore, custom activation functions are sometimes sacrificing the ANN performance as can be seen from the corresponding ANN accuracies. 

We also show the performance of \method in combination with other methods, e.g. QCFS. In this case, we utilize the custom activation function of QCFS to yield high performance at low latency and further ameliorate the performance results. Once again we outperform all the existing methods (with the exception of SlipReLU at the aforementioned latencies) which highlights the effectiveness and applicability of \method.  

The results on the CIFAR-10 dataset presented in the supplementary material show similar patterns and performance with \method.


\subsubsection{Comparison with other types of SNN training methods}



We compare our approach with alternative SNN training methods, including Hybrid Training and Backpropagation Through Time (BPTT), benchmarking against methods such as Dual-Phase~\cite{DBLP:journals/corr/abs-2205-07473}, Diet-SNN~\cite{rathi2021diet}, RecDis-SNN~\cite{guo2022recdis}, HC-STDB~\cite{DBLP:conf/iclr/RathiSP020}, STBP-tdBN~\cite{zheng2021going}, TET~\cite{DBLP:conf/iclr/DengLZG22}, DSR~\cite{meng2022training}. The results presented in Table~\ref{tab:other-snn-training-methods} demonstrate that the performance of our method incorporating with QCFS achieves better accuracies of 94.67\% and 71.50\% on CIFAR-10 and CIFAR-100 for VGG-16 against BPTT and hybrid training with 4 timesteps, respectively. As in general direct training methods fall short in terms of accuracy compared to ANN-SNN conversion methods, we focus on reaching high accuracy at as low latency as possible, hence we couple \method with QCFS.

\begin{table}[htbp]
    \caption{Comparison with other types of SNN training methods}
    \label{tab:other-snn-training-methods}
    \renewcommand\arraystretch{1.}
	\centering
        \scalebox{0.80}{
        \resizebox{1.\textwidth}{!}{
    	\begin{tabular}{cccccc}
            \toprule
    	  Dataset & Architecture & Method & Type & Accuracy & Timestep \\ 
            \toprule
            \multirow{3}{*}{CIFAR-10} 
                & \multirow{3}{*}{VGG-16} 
                                        & Dual-Phase & Hybrid Training & 94.06 & 4 \\ 
                                        \cline{3-6} 
                &                       & Diet-SNN & Hybrid Training & 92.70 & 5 \\ 
                                        \cline{3-6} 
                &                       & \textbf{Our (+QCFS)} & ANN-SNN conversion & \gc{94.67} & 4 \\
            \hline
            \multirow{3}{*}{CIFAR-100} 
                & \multirow{4}{*}{VGG-16} 
                                        & Dual-Phase & Hybrid Training & 70.08 & 4 \\ 
                                        \cline{3-6} 
                &                       & Diet-SNN & Hybrid Training & 69.67 & 5 \\ 
                                        \cline{3-6} 
                &                       & RecDis-SNN & BPTT & 69.88 & 5 \\ 
                                        \cline{3-6} 
                &                       & \textbf{Our (+QCFS)} & ANN-SNN conversion & \gc{71.50} & 4 \\ 
            \hline
            \multirow{12}{*}{ImageNet} 
                & PreAct-ResNet-18      & DSR & Supervised learning & \gc{67.74} & 50 \\ 
                \cline{2-6}
                & \multirow{6}{*}{ResNet-34} 
                                        & HC-STDB & Hybrid Training & 61.48 & 250 \\ 
                                        \cline{3-6} 
                &                       & Dual-Phase & Hybrid Training & 61.20 & 8 \\
                                        \cline{3-6} 
                &                       & RecDis-SNN & BPTT & 67.33 & 6 \\ 
                                        \cline{3-6} 
                &                       & STBP-tdBN & BPTT & 63.72 & 6 \\ 
                                        \cline{3-6} 
                &                       & TET & BPTT & 64.79 & 6 \\ 
                                        \cline{3-6}
                &                       & \textbf{Our (+QCFS)} & ANN-SNN conversion & 62.26 & 6 \\ 
                &                       & \textbf{Our (+QCFS)} & ANN-SNN conversion & 65.28 & 8 \\ 
                \cline{2-6}
                & \multirow{3}{*}{VGG-16}
                                        & HC-STDB & Hybrid Training & 65.19 & 250 \\
                                        \cline{3-6} 
                &                       & Dual-Phase & Hybrid Training & 62.51 & 8 \\
                                        \cline{3-6}
                &                       & Diet-SNN & Hybrid Training & {69.00} & 5 \\
                                        \cline{3-6} 
                &                       & \textbf{Our (+QCFS)} & ANN-SNN conversion & \gc{69.31} & 8 \\ 
            \bottomrule
    	\end{tabular}
        }
        }
\end{table}

The application of our method to the ResNet-34 architecture on the ImageNet dataset at 6 timesteps results in an accuracy of 62.26\%, when compared with TET, STBP-tdBN, and RecDis-SNN utilizing the Backpropagation Through Time (BPTT) strategy. However, at 8 timesteps, our approach attains an accuracy of 65.28\%, surpassing the Dual-Phase method's 61.20\% and the HC-STDB method's 61.48\% at 250 timesteps with hybrid training. In the context of the VGG-16 architecture on the ImageNet dataset, our method demonstrates a higher performance relative to both the Dual-Phase and HC-STDB methods employing hybrid training at 8 timesteps, achieving an accuracy of 69.31\%. It should be noted that, unlike ANN-to-SNN conversion methods, back-propagation methods require the gradient to be propagated through the spatial and temporal dimensions during the training process. This propagation results in a substantial increase in memory and computational resource consumption.

\subsection{Comparison with the Self-Referential Baseline}

We first compare our method with the SNN Calibration (SNNC) approach, as both target post-calibration corrections. This comparison highlights the core similarities and differences in bias correction strategies. We further explore the impact of incorporating our post-calibration method for QCFS, RTS, 

\begin{table}[htbp]
\caption{Comparison of our algorithm with SNN Calibration~\cite{li2021free} on ImageNet. \textit{Use BN} means use Batch Norm layers to optimize ANN. Channel-wise Vth is used.}
\label{tab:imagenet-snnc}
\centering
\scalebox{0.8}{
\begin{tabular}{lcccccccc}
\toprule
Method & Use BN & ANN Acc. & $T=4$ & $T=8$ & $T=16$ & $T=32$ & $T=64$ & $T=128$ \\
\midrule
\multicolumn{9}{c}{\textbf{ResNet-34}} \\
\midrule
SNNC-LP & \xmark & 70.95 & - & - & - & 62.34 & 68.38 & 70.15 \\
Ours & \xmark & 70.95 & 13.97 & 37.55 & 57.44 & \gc{66.40} & \gc{69.36} & \gc{70.48} \\
\cmidrule{2-9}
SNNC-LP & \cmark & 75.66 & - & - & - & 50.21 & 63.66 & 68.89 \\
SNNC-AP & \cmark & 75.66 & - & - & - & 64.54 & 71.12 & 73.45 \\
Ours & \cmark & 75.66 & 13.70 & 38.55 & 60.68 & \gc{70.88} & \gc{74.29} & \gc{75.28} \\
\midrule
\multicolumn{9}{c}{\textbf{VGG-16}} \\
\midrule
SNNC-LP & \xmark & 72.40 & - & - & - & 69.30 & 71.12 & 71.85 \\
Ours & \xmark & 72.40 & 54.72 & 63.81 & 68.79 & \gc{71.11} & \gc{71.93} & \gc{72.25} \\
\cmidrule{2-9}
SNNC-LP & \cmark & 75.36 & - & - & - & 24.88 & 56.77 & 70.49 \\
SNNC-AP & \cmark & 75.36 & - & - & - & 63.64 & 70.69 & 73.32 \\
Ours & \cmark & 75.36 & 51.13 & 64.2 & 71.19 & \gc{73.89} & \gc{74.86} & \gc{75.29} \\
\bottomrule
\end{tabular}
}
\end{table}

SNNC-LP and our method share a fundamental objective of bias correction. However, SNNC-LP employs an average calibration over all time steps, utilizing \(\frac{1}{T}\sum_{t=1}^{T} s(t)^{l} \approx a^{l}\) to align the spiking neuron output with the analog neuron activation. In contrast, our method adopts a more granular approach, calibrating at each individual time step with \(s(t)^{l} \approx a^{l}\), which allows for a more precise and step-wise correction.

\begin{table}[htbp]
\caption{Comparison between the proposed method combining with QCFS on ImageNet dataset. L=16 for VGG-16 and L=8 for ResNet-34.}
\label{tab:imagenet-qcfs}
\centering
\renewcommand\arraystretch{1.}
\scalebox{0.8}
{
\begin{threeparttable}
\begin{tabular}{@{}cllllllllll@{}}\hline
Architecture & Method & ANN & T=1 & T=2 & T=4 & T=8 & T=16 & T=32 & T=64 & T=128\\ 
\hline
\multirow{2}{*}{ResNet-34}
&QCFS & 74.32 & 2.53 & 4.91 & 12.67 & 34.96 & 59.39 & 69.47 & 72.37 & 73.23 \\
&\textbf{Ours (+QCFS)} & 74.32 & \gc{16.95} & \gc{30.10} & \gc{49.94} & \gc{65.28} & \gc{71.66} & \gc{73.57} & \gc{74.07} & \gc{74.23} \\ \hline
\multirow{2}{*}{VGG-16}
&QCFS & 73.91 & 0.72 & 1.50 & 4.68 & 19.10 & 50.87 & 68.35 & 72.82 & 73.98 \\
&\textbf{Ours (+QCFS)} & 73.91 & \gc{25.75} & \gc{41.10} & \gc{58.83} & \gc{69.31} & \gc{72.98} & \gc{74.05} & \gc{74.16} & \gc{74.21} \\  \hline
\end{tabular}
\end{threeparttable}
}
\end{table}

Table~\ref{tab:imagenet-snnc} demonstrates substantial performance improvements, showcasing the effectiveness of our calibration approach. Specifically, when examining the results at \(T=32\) for VGG-16 architectures with batch normalization, our method significantly outperforms SNNC-LP, enhancing the accuracy from 24.88\% to 73.89\% and from 0.254\% to 39.49\% in different scenarios. This stark improvement underscores the advantage of our step-by-step calibration method over the averaged approach of SNNC-LP, particularly in scenarios requiring precise temporal resolution. Results for CIFAR-10 and CIFAR-100 are provided in the appendix.

Our study further assesses the effects of integrating our post-calibration method into QCFS and RTS. Table~\ref{tab:imagenet-qcfs} confirms the efficacy of our method in QCFS, illustrating steady accuracy gains for ResNet-34 and VGG-16 on ImageNet across all timesteps, particularly at lower T values. Notably, VGG-16's accuracy at T=2 soared from 1.50\% to 41.10\%, and ResNet-34's from 4.91\% to 30.10\%. Additionally, we conducted hyperparameter optimization experiments on CIFAR-10 and CIFAR-100, varying the batch size, with results detailed in the appendix. Similar tables are provided for the RTS method in the appendix.

\subsection{Stability of calibration}
We further tested the stability and convergence of \method in dependence on batch iterations used. The results are presented in Figure \ref{fig:dist}. We note that the accuracy in higher latency stabilizes faster, which is due to the accumulated bias and membrane potential through time. Furthermore, the accuracy at a particular higher latency may drop in the iterations when the lower latencies start to "pick-up" and improve the performance. However, these ripples in performance are smoothed quickly over the next few iterations as can be referred to from the figure. Final minor oscillations in the accuracies are due to the hyperparameter $\alpha$ (which may prevent the algorithm from finding the optimal biases), and stochasticity coming from the batch sampling.  

\begin{figure}[htbp]
    \centering
    \begin{subfigure}{0.48\textwidth}
        \centering
        \includegraphics[width=\textwidth]{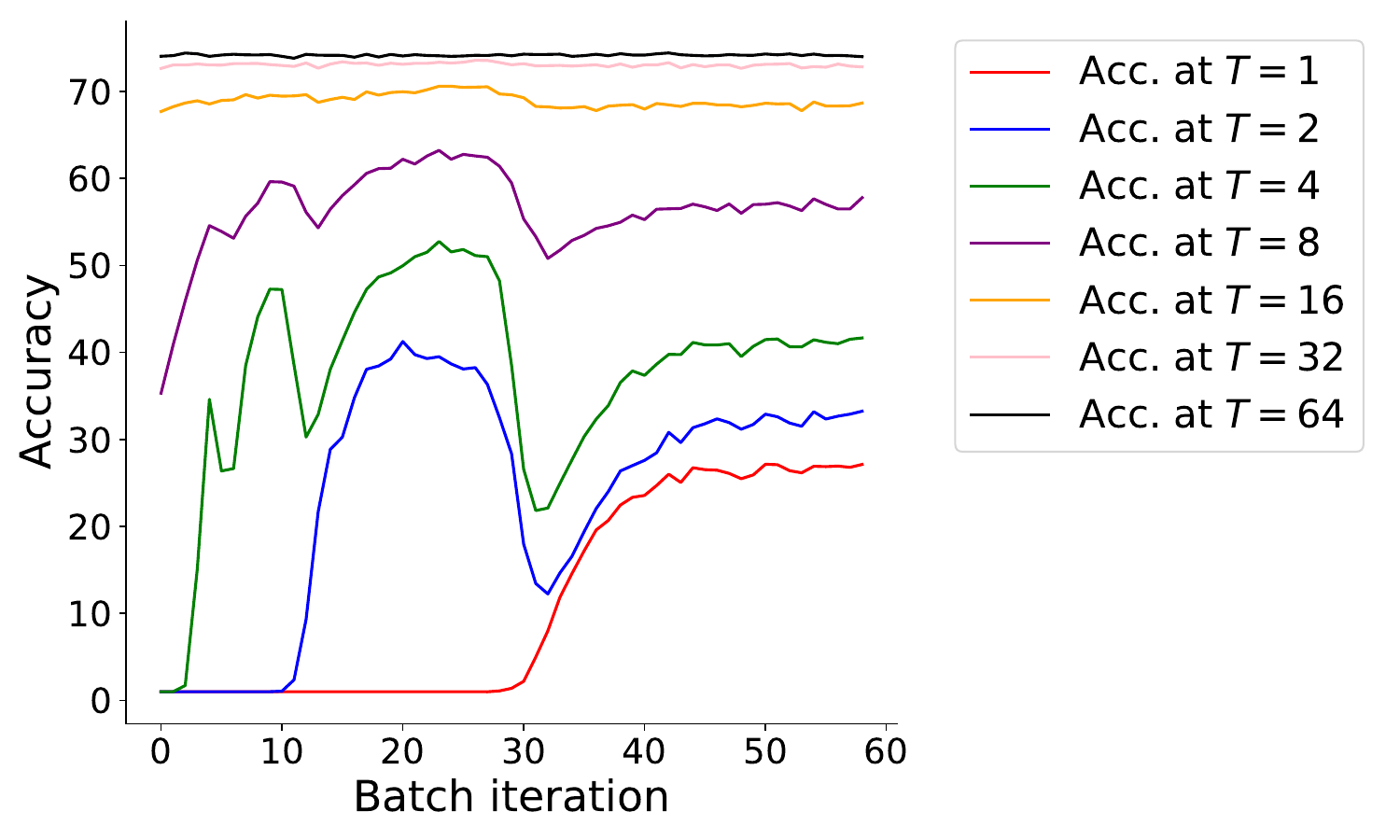}
        \caption{$\alpha=0.5$}
        \label{fig:stable2}
    \end{subfigure}
    \hfill
    \begin{subfigure}{0.48\textwidth}
        \centering
        \includegraphics[width=\textwidth]{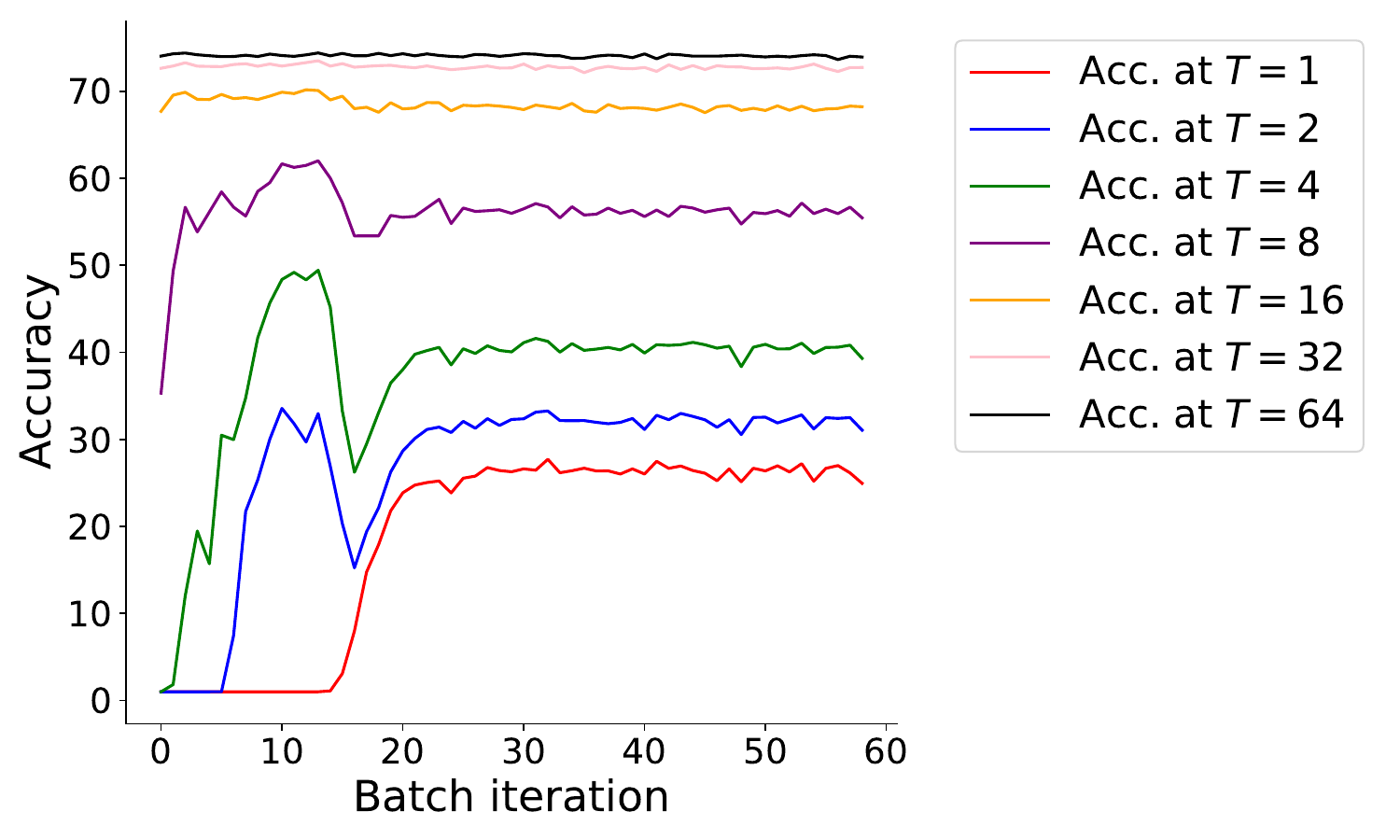}
        \caption{$\alpha=1.0$}
        \label{fig:stable1}
    \end{subfigure}
    \caption{Change of accuracy at various time steps with batch iteration. The reported accuracies are for a VGG16 model trained on CIFAR100 dataset and \(\alpha=0.5\) and \(\alpha=1.0\) as a hyperparameter in calibration (see Algorithm \ref{algo:bias-correction}).}
    \label{fig:dist}
\end{figure}

\section{Conclusion}

In this study, we introduced the Forward Temporal Bias Correction (\method) for ANN-SNN conversion, incorporating temporal dynamics into bias correction to address temporal bias effectively. Through theoretical analysis and extensive experimentation on CIFAR-10/100 and ImageNet datasets, we not only quantified conversion errors but also offered practical solutions to mitigate them, underscoring the potential of \method in bridging the gap between ANNs and SNNs. Our work emphasizes the importance of temporal dynamics in neural computation, presenting a robust framework for enhancing SNN performance while maintaining computational efficiency. 


%
%



\appendix

\section{Discussion and proofs of theoretical results}
\subsection{ANN-SNN pipeline}
\paragraph{ANN-SNN pipeline - Architectures} When considering ANN-SNN conversion, one starts with an ANN model with architecture that contains layers which are linear, or become linear, during the inference. In particular, these include fully connected, convolutional, batch normalization and average pooling layers. We note that the skip connections, in the architectures can also be represented as a linear transformation (identity) of an output of one layer to the input of the connected one. 

\paragraph{ANN-SNN pipeline - ANN activations} In this article, we considered ANN models with \relu or \relu-like activations functions. In particular, we considered thresholded \relu (associated with RTS method) defined as 
$$
\text{TReLU}(x)=\min(\text{ReLU}(x),a),
$$
where $a$ is some positive constant. Depending on the dataset for which the model was trained, we take $a$ to be 1 or 2. We note that one may also recognize the \textit{clamp} function in TReLU.\\
Another function we consider is the (shifted) stairs function (associated with QCFS method) which is defined as
$$
\text{Stairs}(x) = \frac{1}{l}\lfloor y\cdot l+0.5\rfloor,
$$
where $y=clamp(x,0,1)$, and $l$ is a positive integer (the number of steps). In the experiments, the parameter $l$ depends on the datasets on which the model was trained, and we followed the QCFS method experimental setting when dealing with this function.
\subsection{Proofs of theoretical statements}
We next start with proof of the Proposition \ref{prop main}.
\mainProp*
\begin{proof}
    Given some $b\in \mathbb{R}$, we have 
    $$
    \mathbb{E}_{x\sim \lambda}[H(x+b-1)]=\int_{x>-(b-1)} d\Lambda = 1-\Lambda(-(b-1)),
    $$
where $\Lambda$ is CDF corresponding to $\lambda$. As $\Lambda$ is a continuous function with $\Lambda(a)=0$ and $\Lambda(c)=1$, the results follows. Moreover, if $\lambda$ is strictly positive on $(a,c)$, $\Lambda$ is strictly increasing, and the uniqueness follows.
\end{proof}

\setcounter{theorem}{0}

Here is a more precise version of Theorem \ref{thm main}.
\begin{theorem}\label{thm precise}
    Let $\mathcal{N}_A$ and $\mathcal{N}_S$ be an ANN and SNN, respectively, obtained through the ANN-SNN pipeline (see Section \ref{sec ANN-SNN pipeline}). Suppose that the input is a continuous random variable, and that at some layer $l$ and some time step $t$, the membrane potential $v^{(l)}_i[t]$, $i=1,\dots,N_l$, before firing, seen as a function of the input, is a continuous random variable. Then, there exists temporal biases $b^{(l)}_i[t]$ such that the expected output of $\mathcal{N}_S$ at layer $l$ and time step $t$ is equal to the expected output of $\mathcal{N}_A$.
\end{theorem}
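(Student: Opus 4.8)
The plan is to reduce the vector-valued claim to a collection of independent scalar problems, one per neuron of layer $l$, and to solve each by a single application of Proposition \ref{prop main}. Since the bias $b^{(l)}_i[t]$ is a free parameter attached to each individual neuron $i$, and the asserted equality of expected outputs is componentwise, it suffices to fix an arbitrary index $i \in \{1,\dots,N_l\}$ and produce one $b^{(l)}_i[t] \in \mathbb{R}$ matching the expected SNN output to the expected ANN output at that coordinate; collecting these over $i$ then yields the full statement.

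First I would record the explicit form of the SNN output at layer $l$ and time step $t$. By the modified voltage recursion, the temporal bias is applied to the pre-firing potential before thresholding, so the emitted spike is $s^{(l)}_i[t] = \theta^{(l)} H\!\left(v^{(l)}_i[t] + b^{(l)}_i[t] - \theta^{(l)}\right)$, where $v^{(l)}_i[t]$ is the potential after integrating the layer input but excluding the bias we are about to choose. After the weight-scaling step of the ANN-SNN pipeline the threshold is normalized to $\theta^{(l)} = 1$, so the output reduces to $H\!\left(v^{(l)}_i[t] + b^{(l)}_i[t] - 1\right)$ with expectation $\mathbb{E}\big[H(v^{(l)}_i[t] + b^{(l)}_i[t] - 1)\big]$. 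The key structural observation is that the bias does not enter $v^{(l)}_i[t]$ itself, so there is no self-referential dependence and the law of $v^{(l)}_i[t]$ is exactly the one furnished by the hypothesis.

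I would then match this to the template of Proposition \ref{prop main} by taking $\lambda$ to be the law of $v^{(l)}_i[t]$ and setting the target $\alpha = \bar{a}^{(l)}_i := \mathbb{E}[a^{(l)}_i]$, the expected activation of the corresponding ANN neuron. Two hypotheses must then be verified. The potential $v^{(l)}_i[t]$ is continuous by assumption, and it has compact support because the input has compact support and every operation in the pipeline (linear maps, bounded $\{0,1\}$-valued spikes weighted by normalized thresholds, and accumulation over finitely many steps) preserves boundedness; this places $\lambda$ squarely in the setting of the proposition. Moreover, since the scaled ANN uses \relu-like activations clipped to $[0,1]$, we have $a^{(l)}_i \in [0,1]$ pointwise and hence $\alpha = \bar{a}^{(l)}_i \in [0,1]$. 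Proposition \ref{prop main} then supplies a $b^{(l)}_i[t] := b^*$ with $\mathbb{E}\big[H(v^{(l)}_i[t] + b^{(l)}_i[t] - 1)\big] = \bar{a}^{(l)}_i$, i.e. the expected SNN and ANN outputs agree at coordinate $i$.

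The step I expect to be the main obstacle is the verification that the hypotheses of Proposition \ref{prop main} are genuinely met in the pipeline, rather than its application. One must argue that the post-scaling activations truly lie in $[0,1]$ so that $\alpha \in [0,1]$, and that the pre-firing potential inherits compact support from the bounded input propagated through the network; this is precisely where the assumption that $v^{(l)}_i[t]$ is a continuous random variable and the normalization conventions of Section \ref{sec ANN-SNN pipeline} carry the weight. Finally, to upgrade the single-layer statement to the global claim of Theorem \ref{thm main}, the added care is an induction over layers and time steps, fixing earlier biases and checking that the continuity (and compact-support) hypothesis on $v^{(l)}_i[t]$ is preserved forward so that the proposition remains applicable at each stage.
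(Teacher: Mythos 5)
Your proof takes essentially the same route as the paper's: the paper likewise fixes the neuron index, takes $\lambda$ to be the law of the pre-firing potential $v^{(l)}_i[t]$, sets $\alpha$ to the expected ANN activation $a^{(l)}_i$, and invokes Proposition \ref{prop main} directly. Your additional verification that $\alpha\in[0,1]$ and that $\lambda$ has compact support is a sound (and welcome) elaboration of hypotheses the paper's one-line proof leaves implicit.
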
   
\begin{proof}
    Let $a^{(l)}$ be the layer $l$ expected output of $\mathcal{N}_A$. Then, we can directly apply Proposition \ref{prop main} to the PDF of $v^{(l)}_i[t]$ and $a^{(l)}_i$, $i=1,\dots,N_l$ to conclude the existence of the asserted $b^{(l)}_i[t]$. 
\end{proof}

\subsection{Theoretical results in practice} 

\paragraph{The conditions.} The conditions of the previous theorem seem to be too strong to be applicable. Even more so having in mind that our input random variables are discrete (as all the data is). However, on a more abstract level, one can read the theoretical results as: 

\textit{As long as the distribution of the membrane potential is sufficiently diverse, or dense in some open segment, one can approximate the expected output of the ANN with the expected output of the corresponding SNN.} 

That this, weaker condition, is really the satisfied in practice, we plotted the histograms of the membrane potential of neurons in 4th and 7th spiking layer in a VGG16 model (trained on CIFAR100 dataset) in Figure \ref{fig mem dist}. In Figure \ref{fig mem dist} (a), the model has been calibrated for temporal bias for 30 iterations (to achieve stability as is shown in Figure \ref{fig:dist} for all the previous layers (1-3), and the histogram represents the distribution of the membrane potential of a neuron in a layer that has not been calibrated yet. Similarly, in Figure \ref{fig mem dist} (b), the model has been calibrated for temporal bias for 30 iterations, for all the layers 1-5, and in the layer 6 for all the time steps $t=1,2,3$. Then, we recorded the distribution of the membrane potential of a spiking neuron in the layer at time step $t=4$ before spiking. As hopefully both the histograms show, the membrane potential is distributed with high diversity and density.

\begin{figure}[htbp]
    \centering
    \begin{subfigure}{0.48\textwidth}
        \centering
        \includegraphics[width=\textwidth]{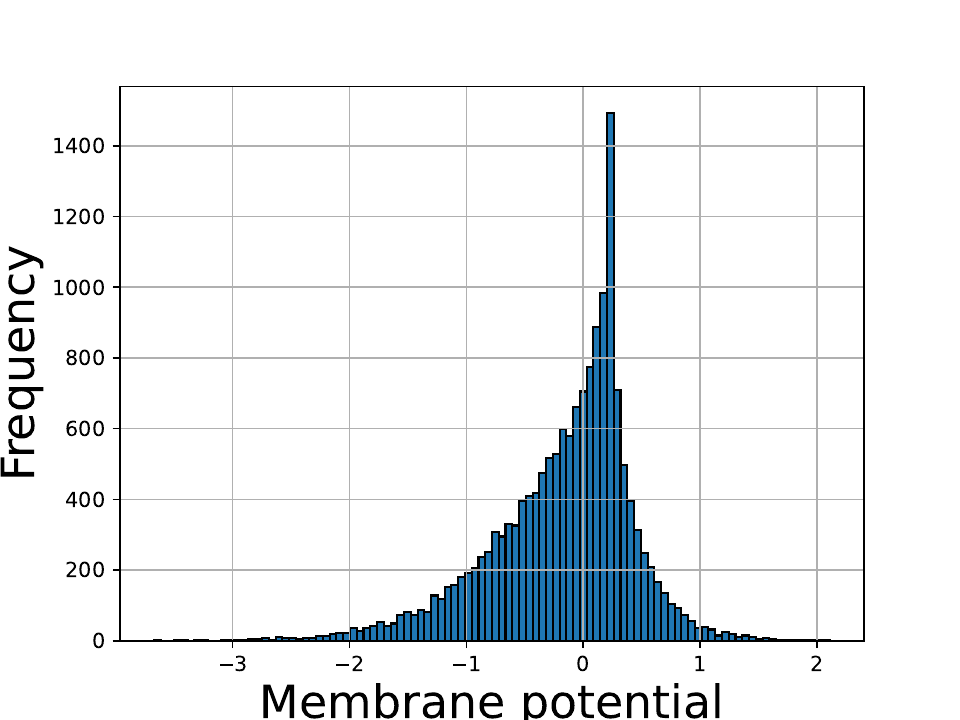}
        \caption{Layer $l=6$, $t=1$}
        \label{fig:stable2}
    \end{subfigure}
    \hfill
    \begin{subfigure}{0.48\textwidth}
        \centering
        \includegraphics[width=\textwidth]{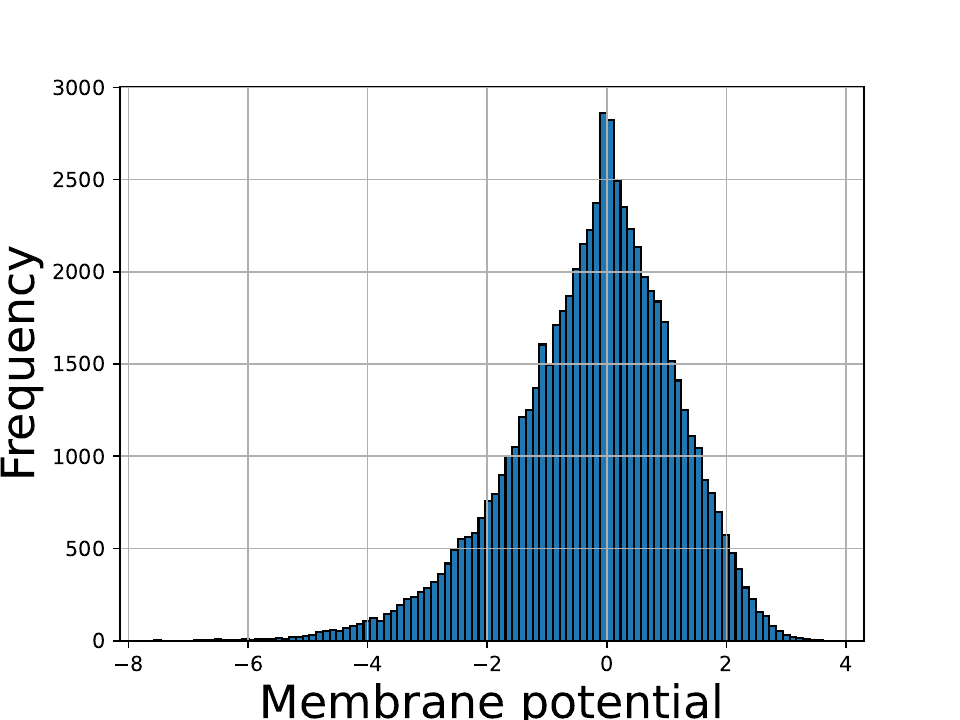}
        \caption{Layer $l=4$, $t=4$}
        \label{fig:stable1}
    \end{subfigure}
    \caption{Membrane potential distributions before firing for VGG16 model (pretrained on CIFAR100 dataset). In both cases, the SNN converted model has been calibrated for all the previous layers and for all the previous time steps (in (b)). A random IF neuron in the layer has been chosen for the presentation of the distributions.}
    \label{fig mem dist}
\end{figure}

\paragraph{Expected vs. True conversion error.} We have emphasized so far that the  expected outputs of ANN and the corresponding SNN can be made to be equal with temporal bias. However, the true error - the difference between outputs - when the same sample is passed through both the networks, may be very high. In order to further reduce the variance of the true error, we propose the implementation of our method channel-wise, rather than layer-wise, which has shown to be optimal in practice.

\section{Experiments Details}


\subsection{Datasets}

\noindent \textbf{CIFAR-10}: The CIFAR-10 dataset~\cite{krizhevsky2010cifar} comprises 60,000 color images of 32x32 pixels each, distributed across 10 distinct classes such as airplanes, cars, and birds, with each class containing 6,000 images. It is divided into a training set of 50,000 images and a test set of 10,000 images.

\noindent \textbf{CIFAR-100}: The CIFAR-100 dataset~\cite{krizhevsky2010cifar} features 60,000 color images of 32x32 pixels, but it spans 100 classes, maintaining the same distribution of 600 images per class. Like CIFAR-10, it includes 50,000 training samples and 10,000 testing samples.

\noindent \textbf{ImageNet}: The ImageNet dataset~\cite{deng2009imagenet} significantly expands the scope with 1,281,167 images across 1,000 classes for the training set, alongside a validation set and a testing set comprising 50,000 and 100,000 images, respectively, all categorized into the same 1,000 object classes. Unlike CIFAR datasets, ImageNet images vary in size and resolution, leading many applications to standardize the images to 256x256 pixels. The validation set is commonly repurposed as the test set in various applications.

\subsection{Post-Calibration Configuration and Setups}

\begin{table}[htbp]
    \caption{Hyperparameter Configuration for Post-Calibration Experiments.}
    \label{tab:config}
    \renewcommand\arraystretch{1.2}
	\centering
        \scalebox{0.90}{
        \resizebox{1.\textwidth}{!}{
    	\begin{tabular}{ccccccr}
            \toprule
    	  Dataset & Architecture & Method & \# Iters & Batch Size & \# Batches  & $\alpha$  \\ 
            \toprule
            \multirow{6}{*}{CIFAR-10} 
                & \multirow{3}{*}{ResNet-20}
                                        & Ours & 10 & 128 & 2 & 0.2 \\
                                        \cline{3-7}
                &                       & Ours (+QCFS) & 4 & 32 & 1 & 0.5 \\ 
                                        \cline{3-7}
                &                       & Our (+RTS) & 10 & 32 & 2 & 0.005 \\
                \cline{2-7}
                & \multirow{3}{*}{VGG-16} 
                                        & Ours & 10 & 128 & 2 & 0.2 \\ 
                                        \cline{3-7}
                &                       & Ours (+QCFS) & 10 & 10 & 1 & 0.5 \\ 
                                        \cline{3-7}
                &                       & Our (+RTS) & 3 & 32 & 3 & 0.005 \\
            \hline
            \multirow{6}{*}{CIFAR-100}            
                & \multirow{3}{*}{ResNet-20}
                                        & Ours & 1 & 128 & 2 & 0.2 \\
                                        \cline{3-7}
                &                       & Ours (+QCFS) & 4 & 32 & 1 & 0.5 \\ 
                                        \cline{3-7}
                &                       & Our (+RTS) & 3 & 32 & 3 & 0.005 \\
                \cline{2-7}
                & \multirow{3}{*}{VGG-16} 
                                        & Ours & 10 & 128 & 2 & 0.2 \\ 
                                        \cline{3-7}
                &                       & Ours (+QCFS) & 4 & 32 & 1 & 0.5 \\ 
                                        \cline{3-7}
                &                       & Our (+RTS) & 3 & 32 & 3 & 0.005 \\
            \hline
            \multirow{5}{*}{ImageNet} 
                & \multirow{3}{*}{ResNet-20}
                                        & Ours & 10 & 128 & 10 & 0.5 \\
                                        \cline{3-7}
                &                       & Ours (+QCFS) & 50 & 2 & 1 & 0.5 \\ 
                \cline{2-7}
                & \multirow{3}{*}{VGG-16} 
                                        & Ours & 10 & 128 & 10 & 0.5 \\ 
                                        \cline{3-7}
                &                       & Ours (+QCFS) & 50 & 2 & 1 & 0.5 \\ 
                                        \cline{3-7}
                &                       & Our (+RTS) & 3 & 128 & 10 & 0.5 \\                
            \bottomrule
    	\end{tabular}
        }
        }
\end{table}

Table~\ref{tab:config} presents the hyperparameter configurations employed in our calibration experiments, illustrating the settings tested across different datasets, architectures, and methods to optimize our post-calibration process.

\subsubsection{Vanilla \method}

In our experimental evaluations conducted on the CIFAR-10 and CIFAR-100 datasets, we adhered to the experimental setup previously described in the SNN Calibration~\cite{li2021free}, utilizing identical model architectures, namely ResNet-20, VGG-16, and MobileNet. Our data augmentation strategies encompassed random cropping with padding, random horizontal flipping, and, optionally, Cutout and AutoAugment techniques. This was followed by dataset-specific normalization procedures. 

For models that incorporate Batch Normalization (BN) layers, we employed Stochastic Gradient Descent (SGD) as the optimizer, with a momentum set to 0.9. The initial learning rate was configured to 0.1, with a cosine decay adjustment over 300 epochs. The weight decay parameter was finely tuned to \(5 \times 10^{-4}\) for the majority of cases; however, for models leveraging the MobileNet architecture, we specifically adjusted the weight decay to \(1 \times 10^{-4}\). In contrast, models devoid of BN layers were optimized with a learning rate of 0.01 and a weight decay of \(1 \times 10^{-4}\), to better accommodate the absence of normalization layers.

For our experiments on the ImageNet dataset, we utilized the same model architectures (ResNet-20, VGG-16, and MobileNet) and checkpoints as those detailed in the SNN Calibration~\cite{li2021free}. Our post-calibration \method involved utilizing batches of 10 mini-batch training data samples for step-by-step bias calibration on the ResNet-34 and VGG-16 models, with a batch size of 128 and \(\alpha=0.5\), and \(\alpha=0.4\) for MobileNet, specifically for the ImageNet dataset. For the CIFAR-10 and CIFAR-100 datasets, the process was conducted with 2 mini-batch training data samples, maintaining a batch size of 128 and setting \(\alpha=0.2\) for bias calibration on the ResNet-20 and VGG-16 models.

\subsubsection{\method integrated with QCFS}

We leveraged the pre-trained models obtained from QCFS~\cite{DBLP:conf/iclr/BuFDDY022} for our experiments on the ImageNet dataset. Consistent with the methodologies outlined in~\cite{DBLP:conf/iclr/BuFDDY022}, we trained ResNet-20 and VGG-16 models on the CIFAR-10 and CIFAR-100 datasets. In addition, a hyperparameter search was conducted specifically for the QCFS calibration process, aimed at identifying the optimal batch size. This search was performed while fixing parameter \(\alpha\) at 0.5.

\subsubsection{\method integrated with RTS}

In our experiments, we employed the checkpoints from the RTS as delineated in~\cite{DBLP:conf/iclr/DengG21} for both ResNet-34 and VGG-16 architectures on the ImageNet dataset, as well as ResNet-20 and VGG-16 on the CIFAR-10 and CIFAR-100 datasets. For the calibration process on the CIFAR-10 and CIFAR-100 datasets, we process 2 to 3 mini-batch training data samples with a batch size of 32 and a calibration parameter, \(\alpha\), set to $0.005$. Conversely, for experiments conducted on the ImageNet dataset, the calibration uses 10 mini-batch training data samples with an increased batch size of 128 and an \(\alpha\) value of $0.5$.

\section{Additional Experiments of Comparison between the Proposed Method \method and Previous Works on CIFAR-10 dataset}

In our evaluation on the CIFAR-10 dataset, as detailed in Table~\ref{tab:ann-snn-cifar10}, our method, particularly when integrated with QCFS on VGG-16, outperforms all baseline methods. For ResNet-20, our approach surpasses SlipReLU at $T\geq4$.

\begin{table}[htbp]
    \caption{Comparison between the proposed method \method and previous works on CIFAR-10 dataset}
    \label{tab:ann-snn-cifar10}
    \renewcommand\arraystretch{1.0}
    \centering
    \scalebox{0.9}
    {
    \begin{threeparttable}
    \begin{tabular}{cccccccccc}
    \toprule
    Architecture & Method & ANN & T=1 & T=2 & T=4 & T=8 & T=16 & T=32 & T=64\\
    \toprule
    \multirow{5}{*}{ResNet-20}
    & TSC & 91.47 & -- & -- & -- & -- & -- & -- & 69.38 \\
    & RTS & 93.25 & -- & -- & -- & -- & 92.41 & 93.30 & 93.55 \\
    & OPI & 92.74 & -- & -- & -- & 66.24 & 87.22 & 91.88 & 92.57 \\
    & QCFS & 91.77 & 62.43 & 73.20 & 83.75 & 89.55 & 91.62 & 92.24 & 92.35 \\
    & SlipReLU & 94.61 & \gc{80.99} & \gc{82.25} & 83.52 & 84.46 & 84.70 & 84.85 & 84.89 \\
    \cmidrule{2-10}
    & \textbf{Ours} & 96.95 & 59.75 & 74.87 & \gc{88.57} & \gc{93.73} & \gc{95.90} & \gc{96.74} & \gc{96.91} \\
    \cmidrule{2-10}
    & \textbf{Ours (+QCFS)} & 92.07 & 70.16 & 79.20 & 87.26 & 91.12 & 92.08 & 92.30 & 92.28 \\ 
    \hline
    \multirow{8}{*}{VGG-16}    
    & RMP & 93.63 & -- & -- & -- & -- & -- & 60.30 & 90.35 \\
    & RTS & 95.72 & -- & -- & -- & -- & -- & 76.24 & 90.64 \\
    & SNNC-AP & 95.72 & -- & -- & -- & -- & -- & 93.71 & 95.14 \\
    & RNL & 92.82 & -- & -- & -- & -- & 57.90 & 85.40 & 91.15 \\
    & RTS & 92.18 & -- & -- & -- & -- & 92.29 & 92.29 & 92.22 \\
    & OPI & 94.57 & -- & -- & -- & 90.96 & 93.38 & 94.20 & 94.45  \\
    & QCFS & 95.52 & 88.41 & 91.18 & 93.96 & 94.95 & 95.40 & 95.54 & 95.55 \\
    & SlipReLU & 93.02 & 88.17 & 89.57 & 91.08 & 92.26 & 92.96 & 93.19 & 93.25 \\
    \cmidrule{2-10}    
    & \textbf{Ours} & 95.69 & 61.76 & 83.10 & 90.47 & 93.33 & 94.83 & 95.40 & 95.60 \\
    \cmidrule{2-10}
    & \textbf{Ours (+QCFS)} & 95.92 & \gc{90.36} & \gc{92.08} & \gc{94.67} & \gc{95.57} & \gc{95.89} & \gc{95.98} & \gc{96.06} \\
    \bottomrule
    \end{tabular}
    \end{threeparttable}
    }
\end{table}

\section{Additional Experiments of Comparison with the Self-Referential Baseline}

Our study delves into both the vanilla effects and the synergistic effects of integrating \method with established methods such as QCFS and RTS, spotlighting \method's adaptability and strength not only as a standalone strategy but also when synergized with other optimization methods. This fusion particularly underscores remarkable enhancements in accuracy particularly at earlier timesteps, thus demonstrating \method's capability to expedite model learning and efficiency.

\subsubsection{Vanilla \method}

Table~\ref{tab:cifar-10-100-snnc} provides an overview of the comparative analysis between our proposed method (\method) and the SNNC-AP approach across various network architectures (ResNet-20, VGG-16, MobileNet) and for both CIFAR-10 and CIFAR-100 datasets. We conduct model calibrations exclusively on architectures featuring Batch Normalization (BN) layers. The proposed method (\method) shows a robust performance, either matching or surpassing the SNNC-AP approach across different timestep configurations ($T=1$ to $T=128$). 

\begin{table}[htbp]
\caption{Comparison of our algorithm with SNNC-AP on CIFAR-10 and CIFAR-100. } 
\label{tab:cifar-10-100-snnc}
\centering
\renewcommand\arraystretch{1.2}
\scalebox{0.9}{
\begin{tabular}{lcccccccccc}
\toprule
Method & ANN Acc. & $T=1$ & $T=2$ & $T=4$ & $T=8$ & $T=16$ & $T=32$ & $T=64$ & $T=128$ \\
\toprule
\multicolumn{10}{c}{\textbf{ResNet-20, CIFAR100}} \\
\midrule
SNNC-AP & 81.89 & 18.55 & 32.50 & 54.08 & \gc{72.69} & 78.73 & 81.03 & \gc{81.88} & 81.94 \\
Ours & 81.89 & \gc{19.96} & \gc{38.19} & \gc{58.08} & 71.74 & \gc{78.80} & \gc{81.09} & 81.79 & 81.94 \\
\midrule
\multicolumn{10}{c}{\textbf{VGG-16, CIFAR100}} \\
\midrule
SNNC-AP & 77.87 & 29.36 & 34.02 & 50.39 & 65.00 & 71.70 & 75.54 & 77.10 & \gc{77.86} \\
Ours & 77.87 & \gc{32.79} & \gc{48.99} & \gc{60.68} & \gc{69.52} & \gc{74.05} & \gc{76.39} & \gc{77.34} & 77.73 \\
\midrule
\multicolumn{10}{c}{\textbf{MobileNet, CIFAR100}} \\
\midrule
SNNC-AP & 73.21 & 1.12 & 2.72 & 3.29 & 4.24 & 16.61 & 56.29 & 68.49 & 71.66 \\
Ours & 73.21 & \gc{1.47} & \gc{2.77} & \gc{5.40} & \gc{14.55} & \gc{44.92} & \gc{63.22} & \gc{69.94} & \gc{72.25} \\
\midrule
\multicolumn{10}{c}{\textbf{ResNet-20, CIFAR10}} \\
\midrule
SNNC-AP & 96.95 & 56.48 & 73.70 & 85.56 & 93.34 & 95.64 & 96.64 & 96.72 & 96.94 \\
Ours & 96.95 & \gc{59.75} & \gc{74.87} & \gc{88.57} & \gc{93.73} & \gc{95.90} & \gc{96.74} & \gc{96.91} & \gc{96.96} \\
\midrule
\multicolumn{10}{c}{\textbf{VGG-16, CIFAR10}} \\
\midrule
SNNC-AP & 95.69 & 60.30 & 75.37 & 84.34 & 92.55 & 93.81 & 95.06 & 95.53 & 95.54 \\
Ours & 95.69 & \gc{61.76} & \gc{83.10} & \gc{90.47} & \gc{93.33} & \gc{94.83} & \gc{95.40} & \gc{95.60} & \gc{95.64} \\
\midrule
\multicolumn{10}{c}{\textbf{MobileNet, CIFAR10}} \\
\midrule
SNNC-AP & 94.11 & 13.91 & 18.68 & 20.18 & 27.64 & 61.80 & 87.46 & \gc{92.67} & 93.41 \\
Ours &  94.11 & \gc{15.93} & \gc{26.64} & \gc{32.27} & \gc{46.97} & \gc{78.59} & \gc{89.64} & 92.58 & \gc{93.58} \\
\bottomrule
\end{tabular}
}
\end{table}

\subsubsection{\method integrated with QCFS}

Table~\ref{tab:cifar100-qcfs} further expands our comparative analysis, scrutinizing the impact of varying batch sizes, ranging from $8$ to $32$, for models such as ResNet-20 and VGG-16 on CIFAR-10 and CIFAR-100 datasets. This batch size optimization exercise underscores the critical role of hyperparameter fine-tuning in amplifying the model's early learning phase, thereby facilitating faster and more efficient convergence. Our method surpasses all baseline metrics for timesteps up to $T=64$, indicating the broad applicability and potential of hyperparameter tuning in enhancing the generalization capability of models.

\begin{table}[htbp]
\caption{Comparison between QCFS and ours integrated with QCFS on CIFAR-10 and CIFAR-100.}
\label{tab:cifar100-qcfs}
\centering
\renewcommand\arraystretch{1.2}
\scalebox{0.92}{
\begin{threeparttable}
\begin{tabular}{@{}clllllllllll@{}}
\toprule
Architecture & Method & \textbf{BS} & ANN & T=1 & T=2 & T=4 & T=8 & T=16 & T=32 & T=64 & T=128 \\
\toprule
\multicolumn{12}{c}{\textbf{CIFAR-10}} \\
\hline
\multirow{5}{*}{ResNet-20} & QCFS & -- & 92.07 & 62.91 & 73.00 & 83.72 & 89.57 & 91.67 & 92.22 & 92.23 & 92.18 \\
                            \cmidrule{2-12}
                            & \multirow{4}{*}{Ours (+QCFS)} & 8 & 92.07 & 68.79 & 78.30 & 86.90 & 90.94 & 92.01 & \gc{92.31} & 92.21 & \gc{92.26} \\
                            &         & 16 & 92.07 & 68.56 & 78.27 & 86.84 & \gc{91.22} & \gc{92.19} & 92.15 & 92.20 & 92.22 \\
                            &         & 32 & 92.07 & \gc{70.16} & \gc{79.20} & \gc{87.26} & 91.12 & 92.08 & 92.30 & \gc{92.28} & 92.23 \\
                            &         & 10 & 92.07 & 67.89 & 77.4 & 86.84 & 91.16 & 92.05 & 92.25 & 92.27 & 92.22 \\
\hline
\multirow{5}{*}{VGG-16} & QCFS & -- & 95.92 & 89.04 & 91.42 & 94.33 & 95.21 & 95.65 & 95.87 & 95.99 & 95.99 \\
                            \cmidrule{2-12}
                            & \multirow{4}{*}{Ours (+QCFS)} & 8 & 95.92 & \gc{90.55} & \gc{92.16} & 94.47 & 95.39 & 95.76 & 95.94 & 95.97 & 95.97 \\
                            &         & 16 & 95.92 & 90.16 & 91.64 & 94.06 & 95.21 & 95.71 & 95.92 & 95.99 & 96.02 \\
                            &         & 32 & 95.92 & 90.33 & 91.85 & 94.42 & 95.47 & 95.79 & 95.92 & 96.00 & 95.99 \\
                            &         & 10 & 95.92 & 90.36 & 92.08 & \gc{94.67} & \gc{95.57} & \gc{95.89} & \gc{95.98} & \gc{96.06} & \gc{96.04} \\
\hline
\multicolumn{12}{c}{\textbf{CIFAR-100}} \\
\hline
\multirow{5}{*}{ResNet-20} & QCFS & -- & 67.09 & 10.64 & 15.34 & 27.87 & 49.53 & 63.61 & 67.04 & 67.87 & \gc{67.86} \\
                            \cmidrule{2-12}
                            & \multirow{4}{*}{Ours (+QCFS)} & 8 & 67.09 & 14.98 & 23.30 & 40.61 & 58.42 & 65.37 & 67.29 & 67.61 & 67.60 \\
                            &         & 16 & 67.09 & \gc{16.09} & 24.60 & 41.88 & 58.53 & \gc{65.77} & 67.23 & 67.66 & 67.73 \\
                            &         & 32 & 67.09 & 15.88 & \gc{25.09} & \gc{42.10} & \gc{58.81} & 65.60 & 67.37 & \gc{67.89} & 67.80 \\
                            &         & 10 & 67.09 & 14.10 & 22.13 & 38.80 & 57.43 & 65.18 & \gc{67.48} & 67.83 & 67.72 \\
\hline
\multirow{5}{*}{VGG-16} & QCFS & -- & 76.21 & 49.09 & 63.22 & 69.29 & 73.89 & 75.98 & 76.53 & 76.54 & \gc{76.60} \\
                            \cmidrule{2-12}
                            & \multirow{4}{*}{Ours (+QCFS)} & 8 & 76.21 & 61.48 & 66.93 & 71.43 & 74.51 & 75.99 & 76.41 & 76.54 & 76.59 \\
                            &         & 16 & 76.21 & 61.78 & 66.54 & \gc{71.50} & 74.82 & 76.10 & 76.46 & \gc{76.62} & 76.54 \\
                            &         & 32 & 76.21 & \gc{62.22} & \gc{67.77} & 71.47 & \gc{75.12} & \gc{76.22} & \gc{76.48} & 76.48 & 76.48 \\
                            &         & 10 & 76.21 & 61.81 & 66.58 & 71.21 & 74.52 & 75.91 & 76.25 & 76.38 & 76.45 \\
\hline
\end{tabular}
\end{threeparttable}
}
\end{table}

\subsubsection{\method integrated with RTS}

Table~\ref{tab:imagenet-rts} and Table~\ref{tab:cifar-10-100-rts} show improved RTS performance on both ImageNet and CIFAR-100 for all evaluated \(T\) values after applying our post-calibration method \method. A significant increase in accuracy is noted for ResNet-20 starting at \(T=8\), where accuracy surges from 1.78\% to 47.00\%. However, the performance gains at lower \(T\) values, specifically \(T=4\) and \(T=8\) for CIFAR-10 and \(T=4\) for CIFAR-100, are constrained. This observation is consistent with the limitations noted in QCFS's performance at these \(T\) values.

\begin{table}[htbp]
\caption{Comparison between RTS and Ours (+RTS) on ImageNet.}
\label{tab:imagenet-rts}
\centering
\renewcommand\arraystretch{1.3}
\scalebox{0.85}{
\begin{threeparttable}
\begin{tabular}{@{}clllllllllll@{}}
\toprule
Architecture & Dataset & Method & ANN & T=1 & T=2 & T=4 & T=8 & T=16 & T=32 & T=64 & T=128\\ 
\toprule
\multirow{2}{*}{VGG-16}
&ImageNet & RTS & 72.16 & 0.10 & 2.38 & 4.78 & 26.49 & 56.83 & 67.96 & 70.93 & 71.86 \\
&ImageNet & \textbf{Ours (+RTS)} & 72.16 & \gc{0.61} & \gc{14.10} & \gc{29.61} & \gc{55.22} & \gc{67.14} & \gc{70.74} & \gc{71.86} & \gc{72.13} \\
\bottomrule
\end{tabular}
\end{threeparttable}
}
\end{table}

\begin{table}[htbp]
\caption{Comparison between RTS and Ours (+RTS) on CIFAR-100.}
\label{tab:cifar-10-100-rts}
\centering
\renewcommand\arraystretch{1.3}
\scalebox{0.95}{
\begin{threeparttable}
\begin{tabular}{@{}clllllllll@{}}
\toprule
Architecture & Dataset & Method & ANN & T=4 & T=8 & T=16 & T=32 & T=64 & T=128\\ 
\toprule
\multicolumn{10}{c}{\textbf{CIFAR-100}} \\
\hline
\multirow{2}{*}{VGG-16}
&CIFAR-100 & RTS & 70.38  & 2.00 & 28.52 & 63.90 & 70.26 & 70.37 & 70.51 \\
&CIFAR-100 & \textbf{Ours (+RTS)} & 70.38 & \gc{3.67} & \gc{30.06} & \gc{64.24} & \gc{70.28} & \gc{70.46} & \gc{70.54} \\
\hline
\multirow{2}{*}{ResNet-20}
&CIFAR-100 & RTS & 69.80  & 1.00 & 1.78 & 62.43 & 68.15 & 69.27 & 69.58 \\
&CIFAR-100 & \textbf{Ours (+RTS)} & 69.80  & 1.00 & \gc{47.00} & \gc{63.40} & \gc{68.76} & \gc{69.43} & \gc{69.68} \\ 
\hline
\multicolumn{10}{c}{\textbf{CIFAR-10}} \\
\hline
\multirow{2}{*}{VGG-16}
&CIFAR-10 & RTS & 92.18 & 33.11 & 88.91 & 92.26 & 92.25 & 92.18 & 92.23 \\
&CIFAR-10 & \textbf{Ours (+RTS)} & 92.18 & \gc{39.16} & \gc{88.95} & \gc{92.48} & \gc{92.27} & \gc{92.29} & \gc{92.26} \\
\hline
\multirow{2}{*}{ResNet-20}
&CIFAR-10 & RTS & 93.25 & 10.00 & 10.00 & 52.11 & 92.41 & 92.88 & 93.08 \\
&CIFAR-10 & \textbf{Ours (+RTS)} & 93.25 & 10.00 & 10.00 & \gc{91.16} & \gc{92.72} & \gc{93.10} & \gc{93.16} \\
\bottomrule
\end{tabular}
\end{threeparttable}
}
\end{table}

\end{document}